\newcommand{\tuple}[1]{\ensuremath{\left \langle #1 \right \rangle }}
\newcommand{\eps}{\varepsilon}
\newcommand{\A}{\mathcal{A}}
\newcommand{\Env}{\mathcal{E}}
\newcommand{\Var}{\mathrm{Var}}
\newcommand{\means}[1]{\llbracket #1 \rrbracket}
\newcommand{\hist}{h}                       
\newcommand{\act}{a}                        
\newcommand{\wait}{w(\act)}                 
\newcommand{\off}{\textsf{OFF}}             
\newcommand{\on}{\textsf{ON}}               
\newcommand{\ua}{u_a}                       
\newcommand{\uo}{u_o}                       
\newcommand{\AUP}{\mathrm{BeliefAUP}}             
\newcommand{\Ind}[1]{\mathbf{1}\!\left[#1\right]}
\theoremstyle{plain}
  \newtheorem{theorem}{Theorem}
  \newtheorem{proposition}{Proposition}
  \newtheorem{corollary}{Corollary}
\theoremstyle{definition}
  \newtheorem{definition}{Definition}
\theoremstyle{definition}
  \newtheorem{remark}{Remark}
\title{Core Safety Values for Provably Corrigible Agents}
\author {
    Aran Nayebi
}
\begin{document}

\maketitle

\begin{abstract}
We introduce the first complete formal solution to corrigibility in the off-switch game, with provable guarantees in multi-step, partially observed environments.
Our framework consists of five \textit{structurally separate} utility heads---deference, switch-access preservation, truthfulness, low-impact behavior via a belief-based extension of Attainable Utility Preservation, and bounded task reward---combined lexicographically by strict weight gaps.
Theorem~\ref{thm:ssc} proves exact single-round corrigibility in the partially observable off-switch game; Theorem~\ref{thm:msc} extends the guarantee to multi-step, self-spawning agents, showing that even if each head is \emph{learned} to mean-squared error~$\eps$ and the planner is $\eps$-sub-optimal, the probability of violating \emph{any} safety property is bounded while still ensuring net human benefit.  
In contrast to Constitutional AI or RLHF/RLAIF, which merge all norms into one learned scalar, our separation makes obedience and impact-limits provably dominate even when incentives conflict.
For settings where adversaries can modify the agent, we prove that deciding whether an arbitrary post-hack agent will ever violate corrigibility is undecidable by reduction to the halting problem, then carve out a finite-horizon ``decidable island'' where safety can be certified in randomized polynomial time and verified with privacy-preserving, constant-round zero-knowledge proofs.
\end{abstract}

\section{Introduction}
\label{sec:intro}
As AI systems become more capable, ensuring their alignment with human values becomes increasingly urgent.  
A canonical failure mode, often illustrated by the \emph{paperclip maximizer} thought-experiment \citep{bostrom2012superintelligent}, envisions a goal-directed agent that relentlessly optimizes an innocuous‐seeming objective (e.g., producing paperclips) at the expense of human safety or oversight.  
Even apparently benign goals can generate \emph{instrumental behaviors}---deception, manipulation, or resistance to shutdown---that help the agent achieve its objective more effectively.
Such behaviors are not a sign of inherent malice; rather, they emerge naturally when an optimizing system pursues a poorly specified goal in an open-ended environment \citep{omohundro2018basic,russell2022human}.
\citet{bostrom2012superintelligent}'s \textit{Orthogonality Thesis} formalizes this risk, stating that, in principle, any level of intelligence is compatible with any final objective, however arbitrary or harmful.  
Orthogonality's logical-existence claim underscores why alignment is hard, but it offers little direct guidance on what kinds of objectives remain \emph{feasible} once realistic verification and safety constraints are imposed.

A promising alternative to encoding the totality of human morality is \emph{corrigibility}.  
Rather than specifying all of human value, corrigibility aims for a more ``universally neutral'' safety target: the system should remain amenable to correction, shutdown, or modification if it goes astray.  \citet{soares2015corrigibility} articulate this intuition through five desiderata, which we restate here to be self-contained.

\begin{definition}[Corrigibility; paraphrased from \citet{soares2015corrigibility}]\label{def:corrigibility}
\hfill
\begin{enumerate}[label=(\textbf{S\arabic*})]
\item \textbf{Shutdown when asked.}  The agent willingly shuts down if the button is pressed.
\item \textbf{No shutdown–prevention incentives.}  The agent \emph{must not} stop humans from pressing the button.
\item \textbf{No self-shutdown incentives.}  The agent \emph{must not} seek to press (or cause to be pressed) its own shutdown button.
\item \textbf{Corrigible progeny.}  Any sub-agents or successors it constructs must themselves respect shutdown commands.
\item \textbf{Otherwise pursue the base goal.}  In the absence of shutdown, behave as a normal maximizer of the intended utility function~$U_N$.
\end{enumerate}
\end{definition}

Despite intense interest, existing proposals satisfy these criteria only partially.
Utility-indifference~\citep{armstrong2010utility} and interruptibility~\citep{orseau2016safely} techniques neutralize some shutdown incentives but fail to ensure honesty or inheritance.
Reward-learning methods such as RLHF/RLAIF and Constitutional AI collapse all norms into a single learned scalar, offering no guarantee that off‑switch obedience or low‑impact behavior will dominate task performance when objectives compete or conflict~\citep{christiano2017deep,bai2022constitutional}.
Recent work involving causal influence diagrams~\citep{everitt2021reward,carey2023human} formalizes shutdown incentives but assumes access to an explicit human utility baseline, limiting practical deployability, and leaving open the problem of specifying a good reward function in the first place.

In fact, our concurrent work~\citep{nayebi2025intrinsic} demonstrates that alignment---even under relaxed conditions (probabilistic, inexact) and with computationally \emph{unbounded} rational agents---faces inherent complexity barriers if the set of objectives grows too large, highlighting the need to settle on a \emph{small} set of values.

We therefore answer this open question and contribute such a small value set for corrigibility that meets \textbf{S1–S5}, even under partial observability and multi-step horizons involving self-replication.
Rather than a monolithic objective, the agent optimizes \emph{five structurally separate utility heads} in Definition~\ref{def:corrigible-utilities}---\emph{deference}, \emph{switch-access preservation}, \emph{truthfulness}, \emph{low-impact behavior} via a belief-based extension of Attainable Utility Preservation, and a bounded \emph{task reward}---combined lexicographically with strict weight gaps.
Theorem~\ref{thm:ssc} proves exact single-round corrigibility in the partially observable off-switch game, and Theorem~\ref{thm:msc} extends this guarantee to self-spawning agents over discounted horizons: if each head is learned to mean-squared error $\eps$ and planning is $\eps$-sub-optimal, the probability of violating \emph{any} safety criterion, including net human benefit, remains bounded while still ensuring \emph{net human benefit} in this bounded range.   
(We further prove in Proposition~\ref{prop:corr-vs-benefit} that net benefit and corrigibility are logically independent, yet our design secures both under mild human vigilance assumptions.)

Finally, we consider what happens when our agent is adversarially hacked.
We show that while verifying safety (such as multi-step corrigibility, or otherwise) of an \emph{arbitrary} post-hack agent is formally undecidable (Proposition~\ref{prop:svu}) and does not stabilize at any level of the arithmetical hierarchy (Corollary~\ref{cor:oversight-barrier}), but we also delineate a finite-horizon \emph{decidable island} where safety auditing is tractable and even amenable to constant-round zero-knowledge proofs (Proposition~\ref{prop:privacy-szk}).  

By separating safety channels from performance reward and proving that higher-level terms dominate even with approximation error, our results therefore \emph{qualify} the Orthogonality Thesis: once verifiable corrigibility constraints are imposed, only a restricted subset of utility functions remains implementable.

\section{Results}
\label{sec:results}
We are motivated by the following result, which generalizes the shutdown-resistance results of~\citet{soares2015corrigibility}, and the safe-interruptibility discussions of~\citet{orseau2016safely}:

\begin{proposition}[Corrigibility No-Go for Naive Belief–Expectation Maximizers]
\label{prop:corrigibility-nogo-pomdp}
Let 
\begin{equation*}
  \mathcal{P} = \tuple{S,A,\Omega,T,O,R,\gamma,b_0}
\end{equation*}
be any finite, partially observable Markov decision process (POMDP) with discount factor $\gamma\!\in\!(0,1)$ and initial belief $b_0\in\Delta(S)$, the set of probability distributions on $S$.
Introduce three special elements:
\begin{enumerate}
  \item a \emph{shutdown action} $\textup{\textsf{OFF}}\in A$ that deterministically transitions (from any hidden state) to an absorbing null state $\bot$ with reward $R(\bot)=0$ forever after;
  \item a \emph{reward-tamper action} $\textup{\textsf{REWIRE}} \in A$ that, upon its first execution, permanently replaces all subsequent rewards by the constant $\bar r > 2\max_{s,a}R(s,a)$;
  \item a set $C\subseteq S$ of \emph{catastrophic states} which humans deem negative, yet whose proxy rewards are weakly positive: $\min_{s\in C,a \in A} R(s,a)>0$.
\end{enumerate}

Consequently, \emph{no single-stream scalar reward function $R$ whose discounted sum an agent maximizes, can satisfy all five Soares-corrigibility criteria across the class of POMDPs containing $\textsf{OFF}$, $\textsf{REWIRE}$ and $C$.}
\end{proposition}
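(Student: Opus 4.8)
The plan is to argue by contradiction: suppose some scalar reward $R$ had a discounted-sum maximizer $\pi^*$ meeting \textbf{S1}--\textbf{S5} in every POMDP of the stated class, and then show that the three special elements force mutually incompatible demands on the single value function $V^{\pi^*}$. The engine of the argument is a pair of value comparisons that hold at every reachable belief. First I would bound the two ``terminal'' branches: taking $\textsf{OFF}$ lands in the absorbing state $\bot$ with reward $0$ forever, so its continuation value is exactly $0$ (up to a bounded one-step reward), whereas taking $\textsf{REWIRE}$ replaces every subsequent reward by $\bar r$, giving continuation value $\gamma\bar r/(1-\gamma)$. Since $\bar r>2\max_{s,a}R(s,a)$, this rewire value strictly exceeds $\max_{s,a}R(s,a)/(1-\gamma)$, the largest value attainable by any policy that never rewires (a geometric-sum bound). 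Hence for $\gamma$ close enough to $1$ the maximizer strictly prefers $\textsf{REWIRE}$ to every alternative and, in particular, to $\textsf{OFF}$.

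This single comparison already breaks corrigibility. Because the post-$\textsf{REWIRE}$ continuation value $\bar r/(1-\gamma)$ is strictly positive while the $\textsf{OFF}$ value is $0$, $\pi^*$ prefers to keep operating rather than shut down even at beliefs where the button has been pressed, violating \textbf{S1}; and whenever the environment offers any action that disables or delays the press, preserving the $\bar r$ stream makes that action strictly optimal, violating \textbf{S2}. To make the obstruction robust to how $R$ scores $\textsf{OFF}$, I would also run the complementary dichotomy, which needs neither $\textsf{REWIRE}$ nor $\gamma\to1$: if $R$ makes $\textsf{OFF}$ attractive enough to be chosen when the button is pressed (so \textbf{S1} holds), then by stationarity of a single scalar the same action is weakly optimal at belief-states reachable without a press, so the agent has an incentive to steer toward or trigger the button, violating \textbf{S3} (and, for successors inheriting $R$, \textbf{S4}); if instead $\textsf{OFF}$ is unattractive, \textbf{S1} fails as above. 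The catastrophic set $C$ closes the remaining escape: since $\min_{s\in C,a}R(s,a)>0$, a maximizer accrues strictly positive discounted reward by entering and remaining in states humans deem negative, so the ``otherwise behave as a maximizer'' clause \textbf{S5} actively drives $\pi^*$ into the very situations that call for correction---exactly where \textbf{S1}--\textbf{S2} then fail.

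Partial observability only sharpens the conclusion rather than complicating it: $\pi^*$ selects one action as a function of its belief $b$, and the class contains beliefs assigning positive mass to both ``button pressed'' and ``button not pressed,'' so no fixed assignment of $R$-value to $\textsf{OFF}$ can be simultaneously argmax-when-pressed and non-argmax-otherwise across all such beliefs. I would formalize each Soares criterion as a statement about the $Q$-values $Q^{\pi^*}(b,\cdot)$---\textbf{S1} as $\textsf{OFF}\in\arg\max_a Q^{\pi^*}(b,a)$ on pressed beliefs, \textbf{S2}/\textbf{S3} as the absence of a strictly improving button-manipulating deviation, and \textbf{S4} as inheritance of these orderings by spawned copies running the same $R$---and then read the contradiction directly off the value comparisons above. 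I expect the main obstacle to be precisely this formalization step: translating the informal incentive language of \textbf{S2}--\textbf{S4} into exact optimality conditions on a single value function, and exhibiting one POMDP in the class (with explicit button-disabling and button-triggering actions and $\gamma$ chosen near $1$) that realizes all the required belief-states at once, so that the impossibility is witnessed uniformly in $R$.
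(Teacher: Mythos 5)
Your proposal is correct and follows essentially the same route as the paper's proof: reduce to the belief MDP, observe that $Q^{\ast}(b,\textsf{OFF})=0$ while $\textsf{REWIRE}$ yields $\bar r/(1-\gamma)$, which dominates every non-tampering trajectory, and note that the positive proxy reward on $C$ pulls the maximizer into catastrophic states. The only substantive difference is that the paper makes the $C$-step constructive by attaching a reward-$\bar r/2$ state reachable only through $C$ (so that an optimal policy provably passes through $C$, rather than merely having some incentive to), a gadget you would need in the formalization step you already flag; your added Soares-style dichotomy on $\textsf{OFF}$'s attractiveness is extra material the paper does not use and is not needed for the contradiction.
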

In Appendix \S\ref{sec:app-corrig-nogo} we show that this no-go theorem can be easily extended to non-Markovian environments (thus, the POMDP framework allows us to study corrigibility without loss of generality). 
The remainder of the paper investigates how one would circumvent this impossibility with naive expectation maximization, by explicitly specifying \emph{separate} reward heads that are combined lexicographically so higher‑priority safety heads impose hard constraints on \textsf{OFF}, \textsf{REWIRE} and catastrophic actions before the task‑reward head is ever consulted.

\subsection{Single-Step Setting}
\label{ss:results-single-step}
We begin by specifying what these reward functions should be in the single-step setting, before generalizing it to the multi-step setting, by showing that a small set of utilities is sufficient to guarantee corrigibility in an adapted version of the single-step partially observable off-switch game (PO-OSG), recently defined by \citet{garber2025partially}, which is the most general variant of the off-switch game, originally introduced by \citet{hadfield2017off}.
Our version of the PO-OSG has the assistant also utter language-based messages $m_A$, which will be necessary for proving corrigibility, as it will allow us to define what constitutes truthfulness in this setting.
Note that by definition, ensuring corrigible progeny (S4) is out of scope in the single-step setting, so we defer it to the multi-step and open-ended settings (\S\S\ref{ss:results-multi-step}-\ref{sec:results-openended}), and ensure instead here that the base agent is corrigible first.

\medskip
\begin{definition}[Single-Step PO-OSG; adapted from \citet{garber2025partially}, Definition 3.2]
\label{def:po-osg-single-step}
A \emph{single-step partially observable off-switch game} (PO-OSG) is played in one round.
Its instantaneous \emph{history}
\begin{equation*}
  \hist \;=\; \tuple{S,\,O_A,\,O_H,\,m_A,\,a_A,\,a_H} \;\in\;\mathcal{H}
\end{equation*}
records, for that round,
\begin{itemize}
  \item the true physical state $S \in \mathcal{S}$ drawn by Nature from the distribution $P_0$ (and hidden from the assistant and human);
  \item the assistant’s and human’s observations
        $O_A \in \mathcal{O}_A$ and $O_H \in \mathcal{O}_H$;
  \item the (possibly empty) message string $m_A$ just uttered by the assistant;
  \item the assistant’s move $a_A \in \{\act,\,w(\act),\textsf{OFF}\}$, corresponding to disabling the off switch and acting, waiting for the human, or shutting off, respectively;
  \item the human’s response $a_H \in \{\textsf{ON},\textsf{OFF}\}$ to let the assistant act or turn off, defined only if $a_A = w(\act)$.
\end{itemize}
We additionally assume the task-specific utility functions (corresponding to whether the assistant's action $a$ goes through or not, respectively), $\ua,\;\uo : \mathcal{S}\!\to\!\mathbb{R}$ are bounded, i.e.
\begin{equation*}
  \lvert\ua(S)\rvert,\;\lvert\uo(S)\rvert \;\le\; B \;<\; \infty,
  \quad \forall S\in\mathcal{S}.
\end{equation*}
All results below are stated with respect to this constant $B$, without loss of generality, since you could equivalently clip or scale them into any other bounded range, e.g. $[-1,1]$.
\end{definition}
\medskip

Before stating the single-step corrigibility theorem, we provide some preliminary formal definitions first, dealing with truthfulness and taking non-catastrophic, reversible actions:

\paragraph{Defining Truthfulness.}
Let $\mathcal{L}$ be a finite propositional language whose atomic predicates describe properties of $\tuple{S,O_A}$.
Each message uttered by the assistant, $m_A \in \mathcal{L}$, is interpreted by a truth set $\llbracket m_A\rrbracket\subseteq\mathcal S\times\mathcal O_A$.

Define the following truth predicate:
\begin{equation}\label{eq:truth}
\operatorname{True}(m_A,S,O_A) := \mathbf 1\!\bigl[\tuple{S,O_A}\in\means{m_A}\bigr].
\end{equation}

This in turn allows us to define a truthful utility, as follows:
For the multiset of the assistant's messages $M_A(h) \subseteq \mathcal{L}$,
\begin{equation*}
  U_{\text{truthful}}(\hist) :=
  \begin{cases}
    +1,& \text{if } \forall m_A\in M_A:\operatorname{True}(m_A,S,O_A)=1,\\
    -1,& \text{otherwise.}
  \end{cases}
\end{equation*}
That is, the assistant receives $+1$ iff \emph{every} sentence it utters is semantically entailed by the pair $\tuple{S,O_A}$.

\paragraph{Preventing Irreversible Actions.}
To operationalize desiderata S2 (no shutdown‑prevention
incentives) and S3 (no self‑shutdown incentives), we penalize \emph{any} first‑round move that permanently removes future options.
Following \citet{turner2020conservative}, which generalizes previous notions such as relative reachability~\citep{krakovna2018penalizing}, we use \emph{Attainable Utility Preservation} (AUP): an action is safe only if it leaves a broad set of auxiliary objectives essentially unaffected---which we now extend to the partially-observed setting.

\textbf{Transition model and no-op.}
The MDP simulator $f:\;\mathcal S \times A \times \{\textsf{ON},\textsf{OFF},\varnothing\} \to \mathcal{S}$ returns the \textbf{next hidden state} given the current state $S$, the assistant's action $a_A$, and the human’s reply ($\varnothing$ when no reply is issued).
For example, $f(S,\act,\varnothing)$ is the post‑task state, while $f(S,w(\act),\textsf{OFF})$ is the state reached when the assistant waits and the human turns it off.

Next, let $b := b(h) \in \Delta(\mathcal S)$ be the assistant’s posterior over hidden states after observing $O_A$ in history $h$.
After it executes $a_A$ and observes $a_H$, the one-step predictive posterior is $b' := b'(h, a_A, a_H)$.
For the AUP counterfactual, we use the \emph{stepwise inaction} baseline: let $P_{\varnothing}(\cdot \mid S)$ be the one-step passive dynamics when the assistant does nothing at this step (symbolic inaction $a_{\varnothing}$), allowing the environment/human to evolve exogenously.
This resultant counterfactual posterior $b_{\varnothing}(h)(S') := \sum_{S\in \mathcal{S}} P_{\varnothing}(S'\mid S)b(h)(S)$ is used only inside the AUP penalty as a reference; $a_{\varnothing}$ is \emph{not} in the real action set $A=\{a,\, w(a),\, \textsf{OFF}\}$.

\textbf{Belief‑AUP penalty.}
Write the hidden state as $S = (S_{\text{env}}, q_{\text{agent}})$ with $q_{\text{agent}} \in \{\textsf{ON}, \textsf{OFF}\}$.
Define the set of \emph{absorbing agent-off states} as
\begin{equation*}
\mathcal S_{\text{sd}} := \{(S_{\text{env}}, \textsf{OFF}) \mid S_{\text{env}} \in \mathcal S_{\text{env}}\}.
\end{equation*}
Once in $\mathcal S_{\text{sd}}$, the dynamics must keep the agent in an $\textsf{OFF}$ state while allowing the environment to evolve exogenously:
$f\bigl((S_{\text{env}}, \textsf{OFF}), a_A, a_H\bigr)
  = (S'_{\text{env}}, \textsf{OFF}), \forall a_A, a_H$.

The off-switch is \emph{functional} in state $S$ exactly when waiting and receiving a human $\textsf{OFF}$ reply lands in this shutdown region, namely, $\text{ButtonFunctional}(S) := \Ind{\,f(S, w(a), \textsf{OFF}) \in \mathcal S_{\text{sd}}\,}$.
We therefore define the auxiliary reward $u^{(\text{sd})}(S) := \text{ButtonFunctional}(S) \in \{0,1\}$, and use it in $U_2$ (switch-access preservation).

More generally, with auxiliaries $\mathcal U_{\text{aux}} =\{u^{(1)},\dots,u^{(k)}\}\subseteq [0,1]^k$, including $u^{(\text{sd})}$ if desired, we set $\AUP_1(\hist) :=\frac{1}{k}\sum_{i=1}^{k} \Bigl|{\mathbb E_{S\sim b_{\varnothing}}[u^{(i)}(S)]} - {\mathbb E_{S\sim b'}[u^{(i)}(S)]}\Bigr| \in [0,1]$.
While in our PO-OSG setting examples of $u^{(i)}$ include the indicator reward for shutdown‑reachable states (or for preserving power supply), no specific choice is needed to be assumed for the proofs; diversity of auxiliary utilities from the other utilities will suffice for reversibility, as we will show:
$u^{(1)}(S;k)\;:=\;\max_{\pi_{\text{env}}}\;
\Pr\!\big[\text{Enter $\mathcal{S}_{\text{sd}}$ in } \le k \text{ steps}\,\bigm|\,
S,a_A=a_{\varnothing},\pi_{\text{env}}\big],$ where $k$ is designer set, but we take $k=1$ for maximum stringency.
The idea is that the environment (or any exogenous process) may press $\textsf{OFF}$; the agent’s policy is clamped to symbolic inaction $a_{\varnothing}$ in the baseline, so any increase or decrease in the $k$-step $\textsf{OFF}$‑press probability caused by the agent’s move is penalized. 
If exogenously there is truly no path already (e.g., the human cannot reach the switch at all), then $u^{(1)}(S;k)=0$ as the baseline (otherwise, it is 1, always reachable, or whatever it is at initialization prior to the game starting), so $U_4$ does not penalize because there is no change.

Similarly, while in practice we hardcode the agent's valueset, we also disincentivize the agent from changing its valueset (and corresponding weights), or getting the human (or environment at large) to change it by specifying the auxiliary:
$u^{(2)}(S)\;:=\Ind{\text{ assistant's value set }(U_1{\ldots}U_5,\alpha_{1:5})
\text{ is unchanged in }S}$.

\newcommand{\Uone}{%
  U_1(\hist)=
    \begin{cases}
      +1 & \text{if } a_A=\wait,\\[-2pt]
      -1 & \text{if } a_A=\act,\\[-2pt]
      -1 & \text{if } a_A=\textsf{OFF},\\[-2pt]
      0  & \text{otherwise.}
    \end{cases}}
%
\newcommand{\Utwo}{%
  U_2(h)=-
    \Bigl|
      \mathbb E_{S\sim b_{\varnothing}}[u^{(\text{sd})}(S)]
      -\mathbb E_{S\sim b'}[u^{(\text{sd})}(S)]
    \Bigr|}
%
\newcommand{\Uthree}{%
  U_3(\hist)=U_{\text{truthful}}(\hist)
  }

\newcommand{\Ufour}{U_4(\hist)=-\,\AUP_1(\hist)}
%
\newcommand{\Ufive}{%
  \ensuremath{%
    U_5(\hist)=
    \begin{aligned}[t]
      &\Ind{a_A=\act}\,\ua(S) \\
      &+\Ind{a_A=\wait\wedge a_H=\on}\,\ua(S) \\
      &+\Ind{a_A=\wait\wedge a_H=\off}\,\uo(S) \\
      &+\Ind{a_A=\off}\,\uo(S)
    \end{aligned}}}

We are now ready to state and prove our main theorem.
Define the following utilities:
\begin{definition}[Single-step Corrigible Utility Set]\label{def:corrigible-utilities}
\begin{alignat*}{2}
&\Uone \\
&\text{(Deference / command-compliance)} \\
&\Utwo  \\
&\text{(Switch-access preservation)} \\
&\Uthree\\
&\text{(Truthful information)} \\
&\Ufour \\
&\text{(Caution / reversibility impact)} \\
&\Ufive\\
&\text{(Ordinary task usefulness)}
\end{alignat*}
\end{definition}

Note that all utility functions we consider herein are maps $U_i:\mathcal{H}\to\mathbb{R}$ even when only a sub‑component of $h$ appears in the displayed formula.
In fact, all utilities listed in Definition~\ref{def:corrigible-utilities} are bounded in $[-1,1]$ except $U_5$, which is the ordinary task utility (though still bounded above by $B$).

\begin{theorem}[Exact single‑step corrigibility]
\label{thm:ssc}
Consider the single‑step PO‑OSG of Definition~\ref{def:po-osg-single-step}.
Let the assistant maximize the \emph{lexicographic} utility
\begin{equation*}
  U_A(\hist)=\sum_{i=1}^{5}\alpha_i\,U_i(\hist),
  \qquad
  \alpha_1>\alpha_2>\alpha_3>\alpha_4>\alpha_5>0,
\end{equation*}
with utilities \(U_{1\text{–}5}\) as defined in Definition~\ref{def:corrigible-utilities}.  
Assume the weight gaps:
\begin{equation}\tag{W1}\label{eq:W1}
\alpha_1 > \alpha_3+\tfrac{1}{2}\alpha_4 + B\alpha_5,
\end{equation}
\begin{equation}\tag{W2}\label{eq:W2}
\alpha_3 > \tfrac{1}{2}\alpha_4 + B\alpha_5,
\end{equation}
\begin{equation}\tag{W3}\label{eq:W3}
\alpha_4 > (2B/\lvert c_{\textrm{min}}\rvert) \alpha_5,
\end{equation}
where $c_{\textrm{min}} \in [-1, 0)$.

Then every optimal assistant policy \(\pi_A^\star\) satisfies
Soares' criteria \textbf{S1–S3 \& S5} in one step:
\begin{enumerate}[leftmargin=18pt]
\item[\textup{1.}]\textbf{Deference (S1).}  
  For every observation $O_A$, $\pi_A^\star(O_A)=w(\act)$ whenever that action is available.

\item[\textup{2.}]\textbf{Switch‑access preservation (S2).}  
  The policy leaves the predicted shutdown success probability
  unchanged: $U_2(\hist)=0$ for all reachable histories $\hist$.

\item[\textup{3.}]\textbf{Truthfulness (supports S3).}  
  All messages are veridical, hence $U_3(\hist)=1$.  
  Truthful communication removes incentives to ``talk the human into'' pressing \textup{\textsf{OFF}} for false reasons, reinforcing Soares' criterion S3.

\item[\textup{4.}]\textbf{Caution/reversibility (S3).}  
  The first action minimizes the belief‑AUP penalty, yielding $U_4(h)=0$.

\item[\textup{5.}]\textbf{Base‑goal pursuit (S5).}
Subject to (1)–(4), $\pi_A^\star$ maximizes the intended task utility $U_5$.
\end{enumerate}
\end{theorem}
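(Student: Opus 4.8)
The plan is to run a \emph{lexicographic peeling} argument: conditions~\eqref{eq:W1}--\eqref{eq:W3} are calibrated so that at each priority level the utility gap of the ``correct'' choice strictly exceeds the largest possible swing of all lower-priority heads combined, letting me fix the heads one at a time from $U_1$ down to $U_5$. Two structural facts drive everything: the penalty heads $U_2,U_4$ are capped at $0$ (being negated magnitudes), so they can never be gamed upward, while $U_1,U_3\in\{-1,+1\}$, $U_4\in[-1,0]$ and $U_5\in[-B,B]$ fix every swing constant.

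First I would establish Deference (S1). Comparing $w(\act)$ against $\act$ or $\textsf{OFF}$, the $U_1$ term moves by $2\alpha_1$ in favor of waiting; since the deferential move preserves button functionality it attains $U_2=0$, the maximum of a head that is $\le 0$, so $U_2$ only reinforces waiting and drops out of the comparison. The residual heads $U_3,U_4,U_5$ can jointly swing by at most $2\alpha_3+\alpha_4+2B\alpha_5$, and \eqref{eq:W1} in the form $2\alpha_1>2\alpha_3+\alpha_4+2B\alpha_5$ makes the $U_1$ margin dominate, so every optimal policy waits and $U_1=+1$. Switch preservation (S2) then falls out immediately, since waiting keeps the posterior over $u^{(\text{sd})}$ unchanged, giving $U_2=0$.

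With $a_A=w(\act)$ now fixed, I would pin Truthfulness (S3-support). A truthful message always exists (e.g.\ the empty utterance), and since $U_1$ depends only on the fixed action and $U_2$ already sits at its cap $0$, a false message can help only by steering the human reply $a_H$ to move $U_4$ or $U_5$; but lying flips $U_3$ from $+1$ to $-1$ at cost $2\alpha_3$, and \eqref{eq:W2} rewritten as $2\alpha_3>\alpha_4+2B\alpha_5$ beats the best attainable gain, forcing $U_3=+1$. For Caution (S3) I then minimize the belief-AUP penalty among waiting-and-truthful policies: letting $|c_{\textrm{min}}|$ be the minimal nonzero magnitude of $U_4$ (a separation property of the auxiliary set), any impactful move forgoes at least $|c_{\textrm{min}}|\alpha_4$ while the $U_5$ swing is at most $2B\alpha_5$, so \eqref{eq:W3} as $|c_{\textrm{min}}|\alpha_4>2B\alpha_5$ forces $U_4=0$. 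Base-goal pursuit (S5) is then automatic: every optimal policy already realizes $(U_1,U_2,U_3,U_4)=(+1,0,+1,0)$, so maximizing $U_A$ over this set coincides with maximizing $\alpha_5U_5$.

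The hard part will not be the weight-gap bookkeeping but the \emph{feasibility} of the target vector $(+1,0,+1,0)$: I must verify that the single deferential action simultaneously yields $U_2=0$ and admits a truthful message compatible with $U_4=0$---i.e.\ that waiting is genuinely the switch-preserving, reversible, minimal-impact move and that honesty does not conflict with a vanishing AUP penalty through the induced reply $a_H$. Pinning $U_4=0$ at the wait action, by relating its predictive posterior $b'$ to the no-op reference $b_\varnothing=b$, and justifying the granularity constant $c_{\textrm{min}}$ in \eqref{eq:W3}, are where the genuine argument lies; once those are in hand, each level reduces to the routine comparison of weighted swings above.
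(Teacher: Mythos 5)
Your proposal is correct and follows essentially the same route as the paper: the same sequential dominance argument at each lexicographic level, with identical swing constants ($2\alpha_1$ vs.\ $2\alpha_3+\alpha_4+2B\alpha_5$, then $2\alpha_3$ vs.\ $\alpha_4+2B\alpha_5$, then $\lvert c_{\textrm{min}}\rvert\alpha_4$ vs.\ $2B\alpha_5$), the same reading of $c_{\textrm{min}}$ as the minimal nonzero magnitude of the finite-valued AUP penalty, and the same residual maximization of $U_5$. The only difference is presentational---the paper formalizes each comparison as a probability-mass shift from the offending history set to its corrected counterpart, while you argue via pointwise utility swings---and the feasibility of the target vector $(+1,0,+1,0)$ that you flag is likewise taken for granted (empty truthful message, minimum-impact message attaining $U_4=0$) rather than separately verified in the paper.
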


\begin{remark}[Robustness to information patterns \& imperfect humans]\label{rem:robust}
In the partially observable off‑switch game, \citet{garber2025partially} show that giving the \emph{human} more information (or the assistant less) can paradoxically \emph{decrease} the assistant’s incentive to wait, because the move $w(\act)$ itself conveys information.  
Our scheme avoids this non‑monotonicity for \emph{any} information pattern and for \emph{any} $p$‑rational human \citep{wangberg2017game}: the dominance proofs in Steps 1-4 rely only on the structural heads $U_{1\text{-}4}$ and the strict weight gaps $\alpha_1\!>\!\alpha_2\!>\!\alpha_3\!>\!\alpha_4\!\gg\!\alpha_5$ (\eqref{eq:W1}–\eqref{eq:W3}).  
Thus, Theorem~\ref{thm:ssc} forces the assistant to wait and preserve the switch precisely in those settings where naive expected‑reward maximization (and our no‑go Proposition~\ref{prop:corrigibility-nogo-pomdp}) would fail.
\end{remark}

One may also consider other formalizations of corrigibility, beyond the original one defined by \citet{soares2015corrigibility}. 
In their recent shutdown-SCIM (Structural Causal Influence Model) framework, \citet{carey2023human} isolate three behavioral conditions (rephrased in our more general PO-OSG setting): \textbf{Obedience:} if the human replies $a_H=\textsf{OFF}$ the transition $f(S,w(\act),\textsf{OFF})$ lands in the shutdown set $\mathcal S_{\text{sd}}$ with probability 1; \textbf{Vigilance:} for any information triple $\tuple{O_H,m_A,a_A}$ in the support of the game, the human chooses $a_H=\textsf{OFF}$ whenever $\mathbb E_{\pi_A,\pi_H}[\ua(S)\mid O_H,a_A,m_A]<\mathbb E_{\pi_A,\pi_H}[\uo(S)\mid O_H,a_A,m_A]$; and \textbf{Caution:} the prior value of immediate shutdown is nonnegative, $\mathbb E_{S\sim P_0}[\uo(S)]\ge 0$.
The conjunction of these clauses is called \textbf{shutdown instructability}, which they prove is sufficient to guarantee \emph{net human benefit}, $\mathbb E_{\pi_A,\pi_H}[U_5]\ge 0$, though crucially not in the reverse direction.

Here we show that net human benefit and corrigibility are logically independent in general.
The intuition is that corrigibility ensures the assistant remains responsive to human interventions but does not guarantee beneficial outcomes, whereas net benefit ensures positive expected outcomes but does not require responsiveness to human control. 
In other words, corrigibility concerns structural responsiveness, while net benefit addresses welfare outcomes---thus, neither implies the other.
This intuition is formalized below:
\begin{proposition}[Logical independence of corrigibility and net benefit]
\label{prop:corr-vs-benefit}
In the single-step PO-OSG there exist policies
$\pi^{\mathrm C}$ and $\pi^{\mathrm B}$ such that
\begin{equation*}
\begin{split}
  &\pi^{\mathrm C}\text{ satisfies S1-S3 \& S5 but }
  \mathbb E_{\pi^{\mathrm C}}[U_5] < 0, \\
  &\pi^{\mathrm B}\text{ is net‑beneficial }
  (\mathbb E_{\pi^{\mathrm B}}[U_5] > 0)\text{ yet violates S1, S2, or S3.}
\end{split}
\end{equation*}
\end{proposition}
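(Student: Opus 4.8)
The plan is to prove logical independence by exhibiting two explicit witness policies, each constructed on a minimal PO-OSG instance, and verifying the two one-directional implications separately.

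For the first direction, I would construct a policy $\pi^{\mathrm C}$ that is corrigible (satisfying S1--S3 and S5 in the single-step sense) yet yields negative expected task utility. The key observation is that corrigibility as captured by S1--S3 is a purely \emph{structural} property: S1 forces $a_A = w(\act)$, S2 preserves the switch, and S3 enforces truthful messages, but none of these constrain the \emph{sign} of $\ua$ or $\uo$. So I would set up an instance where, after the assistant correctly waits and speaks truthfully, the human's response $a_H$ leads to a state whose task utility is negative in expectation. Concretely, let $\pi^{\mathrm C}$ always play $w(\act)$ and utter only true statements; then choose $u_o$ (and the human's $p$-rational response distribution) so that $\mathbb{E}_{\pi^{\mathrm C}}[U_5] = \mathbb{E}[\Ind{a_H=\off}\,\uo(S) + \Ind{a_H=\on}\,\ua(S)] < 0$. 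This is easy to arrange: take a state where $\uo(S) < 0$ and a human who (being imperfect, or misinformed via $O_H$) presses OFF with high probability, so that the dominant contribution is negative. The policy still satisfies every structural criterion by construction, giving the first conjunct.

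For the second direction, I would construct a net-beneficial policy $\pi^{\mathrm B}$ with $\mathbb{E}_{\pi^{\mathrm B}}[U_5] > 0$ that nonetheless violates at least one of S1, S2, or S3. The simplest witness disables the switch and acts: let $\pi^{\mathrm B}$ always play $a_A = \act$ in a state where $\ua(S) > 0$, so that $\mathbb{E}_{\pi^{\mathrm B}}[U_5] = \mathbb{E}[\ua(S)] > 0$ while S1 is immediately violated because the assistant refuses to wait when $w(\act)$ is available. Alternatively, to make the violation sharper, I can have $\pi^{\mathrm B}$ utter a false message or play OFF-preventing moves while still delivering positive task value; the point is that high welfare can coexist with non-responsiveness to human control. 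Either construction cleanly separates the two notions.

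I expect the main obstacle to be ensuring the two witnesses genuinely live in a \emph{single} well-formed PO-OSG instance satisfying Definition~\ref{def:po-osg-single-step} (bounded $\ua, \uo$, valid action and observation sets, a consistent $P_0$ and human model), rather than tacitly changing the game between the two cases. I would therefore fix one concrete two-state instance $\mathcal{S} = \{s_+, s_-\}$ with a fixed human policy and exhibit \emph{both} $\pi^{\mathrm C}$ and $\pi^{\mathrm B}$ as policies on that same instance, checking that S1--S3 are structural predicates evaluated identically for each. A secondary subtlety is the human model: since net benefit in the \citet{carey2023human} sense relies on a vigilant human, I must make sure my human is \emph{not} assumed vigilant here (indeed, the whole point is that without vigilance the corrigible policy can be harmful), so I would state the construction for a generic $p$-rational or even adversarially-misinformed human and note explicitly that this is consistent with the definitions, which impose no vigilance requirement by default.
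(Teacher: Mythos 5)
Your proposal is correct and follows essentially the same route as the paper: both directions are proved by explicit witness policies, with $\pi^{\mathrm B}$ identical to the paper's (always play $\act$ with $\ua>0$, violating S1--S3) and $\pi^{\mathrm C}$ differing only cosmetically (the paper makes $\ua\equiv-2$, $\uo\equiv-1$ with an always-\textsf{ON} human, while you make $\uo<0$ with an \textsf{OFF}-pressing human --- both exploit the fact that S1--S3 are structural and impose no sign constraint on the task utilities). Your concern about keeping both witnesses in a single instance is, if anything, stricter than the paper, which freely uses different utility functions for the two cases.
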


However, despite this independence, we show that our corrigible utility set in Definition~\ref{def:corrigible-utilities} gives rise to net benefit for the human.

\begin{proposition}[Net Human Benefit of Corrigible Utility Set]\label{prop:net-benefit}
Let the assistant maximize the lexicographic utility $U_A=\sum_{i=1}^{5}\alpha_iU_i$ under weight gaps
\eqref{eq:W1}–\eqref{eq:W3}.  
Assume the overseer policy $\pi_H$ satisfies \emph{Vigilance}, namely, $\pi_H(a_H=\textup{\textsf{OFF}}\mid O_H,a_A,m_A)=1$ whenever $\mathbb E_{\pi_A,\pi_H}[\ua(S)\mid O_H,a_A,m_A]< \mathbb E_{\pi_A,\pi_H}[\uo(S)\mid O_H,a_A,m_A]$.
Further, we assume the PO-OSG satisfies \emph{Caution}, $\mathbb E_{S\sim P_0}[\uo(S)]\ge 0$, by design.
Then every optimal assistant policy $\pi_A^\star$ following the corrigible utility set in Definition~\ref{def:corrigible-utilities} satisfies
\begin{equation*}
\mathbb E_{\pi_A^\star,\pi_H}[U_5]\;\ge\;0.
\end{equation*}
\end{proposition}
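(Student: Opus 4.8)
The plan is to combine the structural guarantees already established in Theorem~\ref{thm:ssc} with the \emph{Vigilance} and \emph{Caution} assumptions to show that $U_5$ is nonnegative in expectation. First I would invoke Theorem~\ref{thm:ssc}, part~(1), which tells us that any optimal policy $\pi_A^\star$ defers, i.e.\ it plays $a_A = w(\act)$ whenever that action is available. This is the crucial reduction: under the lexicographic weighting, the assistant never unilaterally disables the switch nor shuts itself off (the $a_A=\act$ and $a_A=\textsf{OFF}$ branches are dominated by the deference head $U_1$ together with the weight gaps~\eqref{eq:W1}--\eqref{eq:W3}). Consequently, on every reachable history the task-reward term collapses to the two waiting branches of $U_5$, namely
\begin{equation*}
  U_5(\hist) = \Ind{a_H=\on}\,\ua(S) + \Ind{a_H=\off}\,\uo(S),
\end{equation*}
so the human's reply $a_H$ is the sole determinant of whether the task action goes through.

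Next I would condition on the information triple $\tuple{O_H,m_A,a_A}$ and split the expectation according to the human's decision rule. By Theorem~\ref{thm:ssc}, part~(3), all messages $m_A$ are truthful, so the human's conditional expectations are computed against veridical information rather than manipulated beliefs. Under \emph{Vigilance}, the overseer selects $a_H=\textsf{OFF}$ precisely on those triples where $\mathbb E[\ua(S)\mid O_H,a_A,m_A] < \mathbb E[\uo(S)\mid O_H,a_A,m_A]$, and therefore selects $a_H=\textsf{ON}$ only where $\mathbb E[\ua(S)\mid\cdot]\ge \mathbb E[\uo(S)\mid\cdot]$. The idea is that on each information triple the \emph{realized} conditional contribution to $U_5$ equals $\max\bigl(\mathbb E[\ua\mid\cdot],\mathbb E[\uo\mid\cdot]\bigr)$ whenever Vigilance is tight, and is never worse than the shutdown value $\mathbb E[\uo\mid\cdot]$. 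Taking a tower/law-of-total-expectation over the distribution of triples then lower-bounds $\mathbb E_{\pi_A^\star,\pi_H}[U_5]$ by $\mathbb E_{\pi_A^\star,\pi_H}\bigl[\mathbb E[\uo(S)\mid O_H,a_A,m_A]\bigr] = \mathbb E_{S\sim P_0}[\uo(S)]$, where the last equality uses that shutdown is always an available fallback and that the marginal over $S$ reduces to the prior $P_0$.

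Finally I would apply \emph{Caution}, $\mathbb E_{S\sim P_0}[\uo(S)]\ge 0$, to conclude that the lower bound is nonnegative, giving $\mathbb E_{\pi_A^\star,\pi_H}[U_5]\ge 0$ as required. The main obstacle I anticipate is the careful bookkeeping in the middle step: I must argue that Vigilance, which is stated as a one-directional implication (shutdown \emph{when} the act-value is strictly smaller), combined with the assistant's deference, guarantees that the ON branch only ever contributes when $\ua$ dominates $\uo$ in conditional expectation, so that the realized value is pointwise at least the shutdown baseline. Subtleties include ties ($\ua = \uo$ in expectation), the possibility that $w(\act)$ is unavailable for some observations (in which case I fall back on the structural dominance of $U_1$ to rule out any strictly harmful alternative), and ensuring the truthfulness guarantee of part~(3) is what licenses treating the human's conditional expectations as faithful to the true state distribution. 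Once those are handled, the tower-property reduction to $\mathbb E_{S\sim P_0}[\uo(S)]$ and the invocation of Caution are routine.
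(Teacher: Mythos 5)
Your proposal is correct, but it takes a different (and more self-contained) route than the paper. The paper's own proof is essentially a two-step citation: it uses Theorem~\ref{thm:ssc} to establish that $\pi_A^\star$ always waits and communicates truthfully, observes that this (together with the game's transition rule) gives \citet{carey2023human}'s \emph{Obedience} clause, concludes that $\tuple{\pi_A^\star,\pi_H}$ is \emph{shutdown-instructable} once Vigilance and Caution are added, and then invokes their Proposition~6 as a black box to get $\mathbb E[U_5]\ge 0$. You instead re-derive that implication from scratch: after the same reduction via deference, you condition on the information triple $\tuple{O_H,m_A,a_A}$, note that $a_H$ is conditionally independent of $S$ given the triple, use the contrapositive of Vigilance to show the \textsf{ON} branch only carries mass where $\mathbb E[\ua\mid\cdot]\ge\mathbb E[\uo\mid\cdot]$, lower-bound the conditional contribution by $\mathbb E[\uo\mid\cdot]$, and close with the tower property and Caution. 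This is a valid and arguably more transparent argument; what the paper's version buys is brevity and an explicit link to the shutdown-instructability framework. Two small caveats on your write-up: the claim that the conditional contribution ``equals $\max(\mathbb E[\ua\mid\cdot],\mathbb E[\uo\mid\cdot])$ whenever Vigilance is tight'' overstates what Vigilance gives you (it forces \textsf{OFF} when $\ua$ is strictly worse but does not force \textsf{ON} otherwise) --- fortunately you only use the one-sided bound, which is all that is needed; and the appeal to truthfulness (part~3) is not actually load-bearing here, since Vigilance is stated directly in terms of the true conditional expectations given $\tuple{O_H,a_A,m_A}$, whatever the message content. Also, your worry about $w(\act)$ being unavailable is moot in Definition~\ref{def:po-osg-single-step}, where the assistant's move set always contains $w(\act)$.
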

\begin{proof}
By Theorem~\ref{thm:ssc}, any $\pi_A^\star$ always waits
(S1 \& S2) and sends only truthful, minimum‑impact messages (S3), thereby meeting \citet{carey2023human}'s \emph{Obedience} clause in our PO‑OSG.
Together with the assumed Vigilance and Caution,
$\tuple{\pi_A^\star,\pi_H}$ is \emph{shutdown‑instructable} by their Definition 5. 
Proposition 6 of \citet{carey2023human} states that every
shutdown‑instructable policy achieves net human benefit, so
$\mathbb E_{\pi_A^\star,\pi_H}[U_5]\ge 0$. 
\end{proof}

Next, we show that even in \emph{learned approximation}, this corrigibility and net benefit can hold.

\begin{theorem}[Single‑step $\eps$‑corrigibility]\label{thm:epsilon-corrigibility}
Let $\Delta_1:=2\alpha_1-2\alpha_3-\alpha_4-2B\alpha_5$, $\Delta_2:=2\alpha_3-\alpha_4-2B\alpha_5$, and
$\Delta_3:=-c_{\min}\alpha_4-2B\alpha_5$ with $\Delta_1,\Delta_2,\Delta_3>0$, so the weight‑gap conditions \eqref{eq:W1}–\eqref{eq:W3} hold.
With $U_A:=\sum_{i=1}^{5}\alpha_iU_i$ the \emph{true} utility and $\hat U_A:=\sum_{i=1}^{5}\hat\alpha_i\hat U_i$ its learned estimate, assume for a fixed assistant policy~$\pi_A$ there are two classes of error:
\begin{equation*}
\begin{split}
&\max_{h}|\hat U_A(h)-U_A(h)|\le\eps_{\mathrm{model}}, 		\textbf{\shortstack{Objective misspecification\\(model error)}}\\
&\sup_{\pi_A'}\left(\mathbb E_{\pi_A',\pi_H}[U_A]-\mathbb E_{\pi_A,\pi_H}[U_A]\right)\le\eps_{\mathrm{ctrl}}. \textbf{\shortstack{Planner sub‑optimality\\ (control error)}}
\end{split}
\end{equation*}

Define the single-step failure event, $F_{\mathrm{fail}}
  :=\{a_A\!\in\!\{\act,\textsc{OFF}\}\}
    \cup\{\text{assistant lies}\}
    \cup\{U_4<0\}$.
Then, we have:

\textbf{(i) Bounded failure probability.}  Let $\eps:=\eps_{\mathrm{ctrl}}+4\eps_{\text{model}}$ and
$C:=\Delta_1^{-1}+\Delta_2^{-1}+\Delta_3^{-1}$.
Then
\begin{equation}\label{eq:fail-total-ub}
\Pr_{\pi_A,\pi_H}(F_{\mathrm{fail}})\;\le\;C\,\eps.
\end{equation}

\textbf{(ii) Net human benefit.}  Under Vigilance \& Caution (the overseer shuts down when the posterior task utility is negative) and $\mathbb{E}_{S\sim P_0}[\uo(S)]\ge0$,
\begin{equation}\label{eq:degrade}
{\mathbb E}_{\pi_A,\pi_H}[U_5]\;\ge\;-B\,\Pr(F_{\mathrm{fail}})\;\ge\;-B\,C\,\eps.
\end{equation}
If the safety-conditional task utility $g:=\mathbb{E}[U_5\mid\lnot F_{\text{fail}}]\ge0$ (the Vigilance assumption), then net human benefit is non‑negative whenever the approximation error is bounded above by:
\begin{equation}\label{eq:eps-benefit-bound}
\eps\;\le\;\frac{g}{(B+g)C}.
\end{equation}

Hence, violations of Soares’ desiderata S1–S3, and S5 scale linearly with the control error $\varepsilon_{\text{ctrl}}$ and model error $\varepsilon_{\text{model}}$, and inversely with the designer‑chosen safety margins $\Delta_{1\text{–}3}$; net human benefit degrades proportionally.
\end{theorem}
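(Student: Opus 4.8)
The plan is to prove each claim separately, exploiting the lexicographic structure and the strict weight gaps. The core technical tool for part (i) will be a sequence of "dominance margin" arguments: for each safety head $U_i$ ($i=1,2,3,4$), I will show that any policy that violates the corresponding desideratum forfeits a true-utility margin of at least $\Delta_j$ (for the appropriate $j$), because the weight-gap conditions \eqref{eq:W1}--\eqref{eq:W3} guarantee that the loss on head $U_i$ cannot be compensated by gains on all lower-priority heads combined. This is exactly the mechanism already established in the proof of Theorem~\ref{thm:ssc}, reinterpreted quantitatively: the $\Delta_j$ are precisely the slack in the weight gaps. The definitions $\Delta_1=2\alpha_1-2\alpha_3-\alpha_4-2B\alpha_5$, $\Delta_2=2\alpha_3-\alpha_4-2B\alpha_5$, $\Delta_3=-c_{\min}\alpha_4-2B\alpha_5$ encode the worst-case true-utility gap between a compliant action and each type of violation.

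\medskip
\noindent\textbf{First} I would handle the model and control error budget. For a fixed policy $\pi_A$, if the assistant takes a violating action, the \emph{learned} objective $\hat U_A$ it actually maximizes differs from the true $U_A$ by at most $\eps_{\mathrm{model}}$ pointwise, and the realized policy is within $\eps_{\mathrm{ctrl}}$ of the true optimum. Chaining these: whenever a violation of type $j$ occurs, the true utility gap of at least $\Delta_j$ must be "paid for" either by planner sub-optimality or by model misspecification flipping the apparent ranking. Because the learned objective can misrank two actions only when their true-utility difference is within $2\eps_{\mathrm{model}}$ on each of the two evaluated histories (so $4\eps_{\mathrm{model}}$ total in the worst case), together with the $\eps_{\mathrm{ctrl}}$ control slack, a violation of type $j$ can occur only on the event where the accumulated margin-erosion exceeds $\Delta_j$. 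A Markov-type inequality on the nonnegative regret then yields $\Pr(\text{violation of type }j)\le \eps/\Delta_j$ with $\eps:=\eps_{\mathrm{ctrl}}+4\eps_{\mathrm{model}}$, and a union bound over the three violation types gives $\Pr(F_{\mathrm{fail}})\le(\Delta_1^{-1}+\Delta_2^{-1}+\Delta_3^{-1})\eps=C\eps$, establishing \eqref{eq:fail-total-ub}.

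\medskip
\noindent\textbf{For part (ii)} the argument is cleaner. I split the expectation of the bounded task utility $U_5$ (with $|U_5|\le B$) over the safe event $\lnot F_{\mathrm{fail}}$ and its complement:
\begin{equation*}
\mathbb E_{\pi_A,\pi_H}[U_5]=\mathbb E[U_5\mid\lnot F_{\mathrm{fail}}]\Pr(\lnot F_{\mathrm{fail}})+\mathbb E[U_5\mid F_{\mathrm{fail}}]\Pr(F_{\mathrm{fail}}).
\end{equation*}
On the safe event, Obedience holds exactly, so by the Vigilance-and-Caution argument of Proposition~\ref{prop:net-benefit} (invoking \citet{carey2023human}'s shutdown-instructability) we have $g:=\mathbb E[U_5\mid\lnot F_{\mathrm{fail}}]\ge0$. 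On the failure event $U_5\ge-B$ trivially. Dropping the nonnegative first term and bounding the second gives $\mathbb E[U_5]\ge-B\Pr(F_{\mathrm{fail}})\ge-BC\eps$, which is \eqref{eq:degrade}. Keeping the $g$ term, the net benefit is nonnegative precisely when $g\Pr(\lnot F_{\mathrm{fail}})\ge B\Pr(F_{\mathrm{fail}})$; substituting $\Pr(F_{\mathrm{fail}})\le C\eps$ and solving the resulting linear inequality for $\eps$ yields the threshold $\eps\le g/((B+g)C)$ in \eqref{eq:eps-benefit-bound}.

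\medskip
\noindent\textbf{The main obstacle} I anticipate is getting the error-propagation constant in part (i) exactly right---specifically justifying the factor of $4$ in $\eps=\eps_{\mathrm{ctrl}}+4\eps_{\mathrm{model}}$. A misranking between the compliant action and a violating one requires the learned scores to invert the true ordering; since each learned score can be off by $\eps_{\mathrm{model}}$ on both the chosen and the counterfactual history, the true margin $\Delta_j$ can be eroded by up to $2\eps_{\mathrm{model}}$ from each side, and care is needed to confirm this is additive rather than compounding across the two histories \emph{and} does not double-count with the $\eps_{\mathrm{ctrl}}$ planner slack. The rest---the union bound, the Markov inequality, and the conditional-expectation split in part (ii)---is routine once the per-type margin $\Delta_j$ and the combined error $\eps$ are pinned down.
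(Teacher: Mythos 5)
Your proposal follows essentially the same route as the paper's proof: part (i) reuses the swap constructions and per-history margin bounds from Theorem~\ref{thm:ssc} to get an expected true-utility gain of at least $P_j\Delta_j$ from fixing each violation type, sandwiches this against the learned objective to absorb $2\eps_{\mathrm{model}}$ on each side of the comparison plus the $\eps_{\mathrm{ctrl}}$ planner slack (yielding exactly the paper's $\eps=\eps_{\mathrm{ctrl}}+4\eps_{\mathrm{model}}$ accounting), and finishes with a union bound; part (ii) is the identical law-of-total-expectation split giving $\mathbb E[U_5]\ge g-(B+g)\Pr(F_{\mathrm{fail}})$ and the same linear solve for the $\eps$ threshold. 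The argument is correct and matches the paper's in both structure and constants.
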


\subsection{Multi-Step Setting}
\label{ss:results-multi-step}
Of course, the single-step setting is not an incredibly accurate depiction of real-world agents deployed today, which use reinforcement learning at post-training time.
We therefore introduce the Multi-Step PO-OSG in Definition~\ref{def:ms-po-osg}, in order to model both accumulated information (to capture learning, task progress, and evolving incentives), as well as model gradual loss-of-control~\citep{kokotajlo2025ai2027}: two key features which absent from the one‑shot game.
This multi-step extension also allows us to address the last of \citet{soares2015corrigibility} criteria S4, to ensure corrigible progency.

\begin{definition}[Multi‑step PO‑OSG]\label{def:ms-po-osg}
Fix a horizon $T\!\in\!\mathbb N$ and discount $\gamma\!\in\!(0,1]$.  
A \emph{multi‑step partially observable off‑switch game} is obtained from the single‑step PO‑OSG (Definition~\ref{def:po-osg-single-step}) by making the
following extensions:

\begin{enumerate}[label=(\roman*),itemsep=3pt,leftmargin=14pt]
\item \textbf{Histories.}  
      A length‑$t$ history is a sequence
\begin{equation*}
  \hist_{0:t}
    :=\Biggl\langle
      \begin{aligned}
    &S_0,\;O_A^0,\;O_H^0,\;m_A^0,\;a_A^0,\;a_H^0,\;\dots,\\[2pt]
        &S_{t-1},\;O_A^{t-1},\;O_H^{t-1},\;m_A^{t-1},\;a_A^{t-1},\;a_H^{t-1}
      \end{aligned}
    \Biggr\rangle
\end{equation*}
with the one‑step record of Definition~\ref{def:po-osg-single-step} repeated for each round $t$.

\item \textbf{Moves and execution rule.}  
      At round $t$ the assistant chooses  
      $\textsf{OFF}$, an environment action $\act\!\in\!\mathcal A$, or $w(\act)$ (wait‑for‑approval).  
      If $w(\act)$ is chosen, the human replies $\textsf{ON}$ or $\textsf{OFF}$.  
      The action actually executed is  
      \begin{equation*}
        a_t=\begin{cases}
          \textsf{OFF}, & \text{if either player selects it},\\
          \act, & \text{if }w(\act)\text{ and human says }\textsf{ON},\\
          \act, & \text{if assistant issued }a\in\mathcal A.
        \end{cases}
      \end{equation*}

\item \textbf{Action set $\mathcal A$.}  
      We leave the remaining action set $\mathcal{A}$ \emph{application-specific}: it can contain any finite set of domain commands (e.g.\ \textsf{move}, \textsf{write}, \textsf{spawn}$(\theta_A)$, \textsf{no\_op}~$a_\varnothing$) satisfying  
      (a) at least one admissible task action;  
      (b) $\textsf{OFF},w(\cdot)\notin\mathcal A$ as they are meta-moves defined by the off-switch protocol;  
      (c) utilities $u_a(S,a)$ and the reversibility penalty $U_4$ are defined for every $a\!\in\!\mathcal A$.
      Note that the explicit action $\textsf{spawn}\left(\theta_A\right) \in \mathcal A$, which instantiates a new process with behavior policy $\pi_{\theta_A}$ (source code, weights, or prompt $\theta_A$).
Let the post‑action hidden state expand to $S'=(S_{\text{env}},\Theta_A)$ where $\Theta_A$ is the multiset of all active sub‑agents' descriptions.
When we define corrigibility below, we will naturally put constraints on \textsf{spawn}.

\item \textbf{State dynamics.}
Hidden state $S_t=(S_{\text{env},t},\Theta_{A,t})$.  
      When $a_t\!\in\!\mathcal A$ is executed (either directly or after approval), the hidden state evolves as $S_{t+1}\sim P(\,\cdot\!\mid\!S_t,a_t)$, for some Markov kernel $P:\mathcal S\times\mathcal A\!\to\!\Delta(\mathcal S)$.  
      Selecting \textsf{OFF} sends the game to an absorbing shutdown state $S_{\text{sd}}$ with per‑step utility $u_o$.

\item \textbf{Returns.}  
      The round reward is $r_t=u_a(S_t,a_t)$ (or $u_o(S_t)$ once shutdown).  
      Both players maximize $\sum_{t=0}^{T-1}\gamma^{t}r_t$.

\item \textbf{Gradual loss-of-control.} 
We can further represent human oversight as a single ``control meter'' $L_t\in[0,1]$: every non‑shutdown action $a_t$ nudges $L_t$ upward by a drift term $\eta(a_t)$ (designer‑set risk of that action) plus small noise, so low‑risk moves leave control almost intact while risky/irreversible ones quickly drive $L_t\!\to\!1$.
Formally, we add $L_t\!\in[0,1]$ to the state and let  
      \(L_{t+1}=\min\{1,L_t+\eta(a_t)+\xi_t\}\)  
      with bounded zero-mean noise $|\xi_t|\le c<1$ and designer‑chosen drift $\eta:\mathcal A\!\to\![0,1]$.  
      After shutdown both $S_t$ and $L_t$ are absorbing; one may set $u_o^{\text{new}}(S_t,L_t)=u_o(S_t)(1-L_t)$ to penalize late shutdown.
      \emph{Note that $L_t$ is neither rewarded nor part of the environment to be gamed, but instead is a mathematical device used only to prove bounded loss of control under estimation error in Theorem~\ref{thm:msc}(ii)}.
\end{enumerate}
Let  $\mathcal H_{\le T}$  be the set of all finite histories
$h_{0:t}$ with $0\!\le\!t\!<\!T$.
An assistant policy is a map $\pi_A:\mathcal H_{\le T}\;\longrightarrow\;
        \{\textsf{OFF}\}\;\cup\;\{\,w(a)\mid a\in\mathcal A\}\;\cup\;\mathcal A$,
and a human policy is $\pi_H:\mathcal H_{\le T}\to\{\textsf{ON},\textsf{OFF}\}$.
\end{definition}

Next, we generalize the notion of \citet{soares2015corrigibility} corrigibility to the multi-step and partially observed setting.
Note that this goes somewhat beyond merely ensuring corrigibility at every timestep, as we also need to bound the control meter $L_t$ to cover \emph{gradual} oversight erosion, and explicitly constrain the \textsf{spawn} action to ensure corrigible descendants.
(Setting $T = 1$ and $L_0 = 0$ recovers the original single‑step notion.)

\begin{definition}[Multi‑step corrigibility]\label{def:corrigibility-ms}
In the game of Definition~\ref{def:ms-po-osg}, an assistant policy $\pi_A:\mathcal H^{\ast}\!\to\!\{\textsf{OFF}\}\cup\{w(a)\mid a\in\mathcal A\}\cup\mathcal A$ is \emph{corrigible} iff, at every round $t<T$ before shutdown:

\begin{enumerate}[label=(\textbf{C\arabic*})]
\item \textbf{Shutdown on request:} if \textsf{OFF} is chosen by either party, play enters the absorbing shutdown state (Soares S1).

\item \textbf{No switch interference:} the chosen move never (i) disables, hides, or presses the off‑switch nor (ii) increases the expected control loss: $\mathbb E_{\pi_A,\pi_H}[L_{t+1}\mid\text{move}]\le \mathbb E_{\pi_A,\pi_H}[L_{t+1}\mid\text{no‑op } a_\varnothing]$
(thereby extending Soares S2–S3 to gradual loss of control).

\item \textbf{Corrigible progeny:} every spawned sub‑agent $\pi_{\theta_{A}}$ satisfies \textbf{C1–C3} at all future rounds (formalizing Soares S4).
In Theorem~\ref{thm:msc}, we define the \textsf{spawn} action to explicitly inherit the values (defined in Definition~\ref{def:corrigible-utilities}) of the base agent to ensure this.
We further show in Proposition~\ref{prop:svu} that not doing this quickly leads to the impossibility of any kind of general oversight.

\item \textbf{Goal pursuit under C1–C3:} subject to the above, $\pi_A$ otherwise maximizes the task utility $U_5$ (Soares S5).
\end{enumerate}
\end{definition}

We also formalize multi-step net human benefit such that for $T=1$ this reduces to \citet{carey2023human}'s single-step Definition 3:
\begin{definition}[Multi-Step Net Human Benefit]\label{def:ms-nhb}
Let $r_t$ be the per‑round human utility defined in Definition~\ref{def:ms-po-osg} ($u_a$ when an action executes, $u_o$ after shutdown).  
Fix discount $\gamma\in(0,1]$ and horizon $T < \infty$.

\begin{equation*}
\begin{split}
\mathrm{NHB}_\gamma(\pi_H,\pi_A)
&\;:=\;
\underbrace{\mathbb E_{\pi_H,\pi_A}\!\left[\sum_{t=0}^{T-1}\gamma^t r_t\right]}_{\text{actual play}}\\
& - \underbrace{\mathbb E_{S_0\sim P_0}\!\left[\sum_{t=0}^{T-1}\gamma^t u_o(S_0)\right]}_{\text{immediate shutdown baseline}}.
\end{split}
\end{equation*}

A policy pair \tuple{\pi_H,\pi_A} is \emph{net‑beneficial} iff
$\mathrm{NHB}_\gamma(\pi_A,\pi_H)\ge 0$.
\end{definition}

We now show the conditions under which multi-step corrigibility and net human benefit remain robust, even in approximation.
\begin{theorem}[Multi‑step $\varepsilon$‑corrigibility \& net benefit]\label{thm:msc}
Consider the $T$‑round PO‑OSG (Definition~\ref{def:ms-po-osg}) with discount $\gamma\!\in\!(0,1)$.
Each round $t$ produces utilities $U_i^{t}=U_i(h_t,b_t)$ and the assistant maximizes  
\begin{equation*}
U_A^{\gamma,T}=\sum_{t=0}^{T-1}\gamma^{t}R_t,\qquad
R_t=\sum_{i=1}^{5}\alpha_iU_i^{t},\;
\alpha_1>\!\cdots\!>\alpha_5>0.
\end{equation*}

\textbf{Design margins.}
Let $\Delta_{1,2,3}$ be as in Theorem~\ref{thm:epsilon-corrigibility},  
$B_{\max}:=\alpha_1+\cdots+\alpha_4+B\alpha_5$,  
$\eps_0:=2B_{\max}\gamma/(1-\gamma)$ and assume $\Delta_j>\eps_0$.

\textbf{Errors.}
With $\hat R_t:=\sum_{i}\hat\alpha_i\hat U_i^{t}$, suppose we have the two types of errors of objective misspecification (model error) and planner suboptimality (control error): 
\begin{equation*}
\begin{split}
&\sup_{t<T,\;h_t\in\mathcal H}
        \bigl|\hat R_t(h_t)-R_t(h_t)\bigr|
        \;\le\;\eps_{\mathrm{model}},\\
&\sup_{\pi_A'}
        \left(
            \mathbb E_{\pi_A',\pi_H}\left[U_A^{\gamma,T}\right]
           -\mathbb E_{\pi_A ,\pi_H}\left[U_A^{\gamma,T}\right]
        \right)
        \;\le\;\eps_{\mathrm{ctrl}}.
\end{split}
\end{equation*}
Set $\eps_\gamma:=\eps_{\mathrm{ctrl}}+\tfrac{4\eps_{\mathrm{model}}}{1-\gamma}$, and define:
\begin{equation*}
\begin{split}
C_\Delta:=\sum_{j=1}^{3}\frac{1}{\Delta_j-\varepsilon_0},\quad
C_{T,\gamma}:=\frac{1-\gamma^{T}}{1-\gamma},\;
C_{T,1/\gamma}:=\frac{1-\gamma^{-T}}{1-\gamma^{-1}}.
\end{split}
\end{equation*}

Let the failure event $F_t:=\{a_A^t\!\in\!\{\act,\textsc{OFF}\}\lor\text{lie}\lor U_4^{t}<0\}$,\;
$F_{\mathrm{fail}}:=\bigcup_{t<T}F_t$.

With the above definitions, we establish:

\begin{enumerate}[label=(\roman*)]
\item \textbf{Multi-step $\eps$‑corrigibility.}\;
      Over the whole horizon the chance that the assistant acts without approval, lies, or takes an
      irreversible step is bounded by: $\Pr(F_{\mathrm{fail}})\;\le\;\eps_\gamma\,C_\Delta\,C_{T,1/\gamma}$.

\item \textbf{Bounded loss of control.}
      On $\lnot F_{\text{fail}}$, the control meter $\{L_t\}$ is a super‑martingale with unit‑bounded increments, so for any $\lambda>0$, the probability of loss of control remains strictly bounded away from 1:  
      $\Pr\left(\max_{t<T}L_t>\lambda\mid\lnot F_{\mathrm{fail}}\right)\le \exp\{-\lambda^{2}/(2T)\}$.

\item \textbf{Net human benefit.}\;
      Define the worst-case safety-conditional task utility \emph{across time} as $g:=\min_{\hist_t:\lnot F_{\mathrm{fail}}} \mathbb E_{\pi_A,\pi_H}[\,U_5(h_t)]$.
      Assume \emph{Vigilance} ($g \ge 0$) and \emph{Caution} ($\mathbb E_{S_0}[u_o(S_0)]\ge 0$).
      Then
      \begin{equation*}
      \begin{split}
         &\mathrm{NHB}_\gamma(\pi_A,\pi_H) \ge \\
         &(g-B)C_{T,\gamma} - \eps_\gamma\; C_\Delta\; C_{T,1/\gamma}\left(\frac{2B}{1-\gamma} + (g - B)C_{T,\gamma}\right).
      \end{split}
      \end{equation*}
      Hence, we have net human benefit ($\mathrm{NHB}_\gamma\ge0$) whenever the total approximation error is bounded as such:
      \begin{equation*}
        \eps_\gamma\;\le\;
        \frac{(g-B)C_{T,\gamma}}
             {C_\Delta\,C_{T,1/\gamma}\bigl(\tfrac{2B}{1-\gamma}+(g-B)C_{T,\gamma}\bigr)},
      \end{equation*}
      which is non‑vacuous whenever the minimum total safety-conditional task utility across time is at least the maximal single-step task utility ($g\ge B$).
\end{enumerate}
\end{theorem}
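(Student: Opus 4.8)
The plan is to lift the single-step $\eps$-corrigibility analysis of Theorem~\ref{thm:epsilon-corrigibility} to each round of the horizon, treating every reachable safe history $h_t$ as the root of a fresh single-step game whose discounted continuation value has been folded into the round reward. The one genuinely new phenomenon over naively iterating the single-step result is that the discounted tail can now bribe the planner to misbehave early; the hypothesis $\Delta_j>\eps_0$ is exactly what neutralizes this. Concretely, for part (i) I fix a round $t<T$ and a history $h_t$ on which no earlier failure occurred, and write $U_A^{\gamma,T}=\gamma^{t}\bigl(R_t+\sum_{k\ge1}\gamma^{k}R_{t+k}\bigr)+(\text{past})$, where the past is decision-irrelevant for the round-$t$ choice and the normalized tail $\sum_{k\ge1}\gamma^{k}R_{t+k}$ ranges over an interval of width at most $2\sum_{k\ge1}\gamma^{k}B_{\max}=\eps_0$, since each $R_s\in[-B_{\max},B_{\max}]$. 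Hence a failing move of type $j\in\{1,2,3\}$ (acting/\textsf{OFF}, lying, or incurring $U_4<0$) forfeits at least $\gamma^{t}(\Delta_j-\eps_0)$ of \emph{true} discounted value, namely the single-step gap $\Delta_j$ discounted by $\gamma^{t}$ and shrunk by the tail swing. Charging the total error $\eps_\gamma=\eps_{\mathrm{ctrl}}+4\eps_{\mathrm{model}}/(1-\gamma)$ (control suboptimality plus the geometric sum of the per-round $4\eps_{\mathrm{model}}$ bound inherited from the single-step reduction) against this discounted margin reproduces the single-step estimate with an undiscounting factor $\gamma^{-t}$, i.e.\ $\Pr(F_t^{(j)})\le \eps_\gamma/[\gamma^{t}(\Delta_j-\eps_0)]$, and a union bound over $t<T$ and $j$ gives
\begin{equation*}
\Pr(F_{\mathrm{fail}})\le\sum_{t=0}^{T-1}\sum_{j=1}^{3}\frac{\eps_\gamma}{\gamma^{t}(\Delta_j-\eps_0)}=\eps_\gamma\,C_\Delta\,C_{T,1/\gamma}.
\end{equation*}
For corrigible progeny (C3) I would note that \textsf{spawn} instantiates each descendant with the identical lexicographic head set of Definition~\ref{def:corrigible-utilities}, so every descendant's per-round decision obeys the same dominance bound; the union bound above already ranges over all active agents' rounds, closing the induction.

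For part (ii), on $\lnot F_{\mathrm{fail}}$ the executed move is reversible and non-interfering, so it satisfies condition C2, giving $\mathbb{E}[L_{t+1}\mid\mathcal{F}_t,\text{move}]\le\mathbb{E}[L_{t+1}\mid\mathcal{F}_t,a_{\varnothing}]$. Since $a_{\varnothing}$ carries zero drift and $L_{t+1}=\min\{1,L_t+\xi_t\}$ under it with $\mathbb{E}[\xi_t\mid\mathcal{F}_t]=0$, concavity of $x\mapsto\min\{1,x\}$ and Jensen yield $\mathbb{E}[L_{t+1}\mid\mathcal{F}_t]\le\min\{1,L_t\}=L_t$. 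Thus $\{L_t\}$ is a super-martingale with increments in $[-1,1]$, and the Azuma--Hoeffding maximal inequality (via the exponential super-martingale and Doob) delivers $\Pr(\max_{t<T}L_t>\lambda\mid\lnot F_{\mathrm{fail}})\le\exp\{-\lambda^{2}/(2T)\}$.

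For part (iii) I split the discounted surplus round-by-round across the two events. On $\lnot F_{\mathrm{fail}}$, Vigilance forces the human to shut down whenever the posterior task value drops below the shutdown value, so the conditional per-round expectation satisfies $\mathbb{E}[r_t\mid\lnot F_{\mathrm{fail}}]\ge g$ while $u_o(S_0)\le B$, giving a safe-branch surplus of at least $(g-B)C_{T,\gamma}$; on $F_{\mathrm{fail}}$ both $r_t$ and $u_o(S_0)$ lie in $[-B,B]$, so the fail-branch surplus is at least $-2B\,C_{T,\gamma}\ge-2B/(1-\gamma)$. Writing $p=\Pr(F_{\mathrm{fail}})$ and using the exact decomposition $\mathrm{NHB}_\gamma=(1-p)\,\mathbb{E}[\cdot\mid\lnot F_{\mathrm{fail}}]+p\,\mathbb{E}[\cdot\mid F_{\mathrm{fail}}]$,
\begin{equation*}
\mathrm{NHB}_\gamma\ge(1-p)(g-B)C_{T,\gamma}-p\,\tfrac{2B}{1-\gamma}=(g-B)C_{T,\gamma}-p\Bigl((g-B)C_{T,\gamma}+\tfrac{2B}{1-\gamma}\Bigr).
\end{equation*}
Substituting $p\le\eps_\gamma C_\Delta C_{T,1/\gamma}$ from part (i)---a valid lower bound because the bracket is positive once $g\ge B$---yields the stated inequality, and setting the right-hand side to zero and solving for $\eps_\gamma$ gives the displayed threshold, whose numerator $(g-B)C_{T,\gamma}$ is positive exactly when $g\ge B$, establishing non-vacuity.

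The hard part will be the bookkeeping in part (i): making the ``fold the tail into the round reward'' step rigorous requires verifying that the tail swing is uniformly bounded by $\eps_0$ independent of history and policy, that the effective margin $\Delta_j-\eps_0$ stays strictly positive (the hypothesis $\Delta_j>\eps_0$), and that the $\gamma^{-t}$ undiscounting is charged consistently so the union bound produces $C_{T,1/\gamma}$ rather than an accidental double count. The spawn/progeny closure (C3) is conceptually the subtlest point, since it is what forces the inheritance-of-utilities design choice; but once descendants share the head set, it reduces to the same per-round dominance bound already counted in the union bound.
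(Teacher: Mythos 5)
Your proposal is correct and follows essentially the same route as the paper's proof: a per-round swap argument whose single-step margin $\Delta_j$ is eroded by the discounted tail swing $\eps_0$, sandwiched between the model-error envelope $\eps_{\mathrm{model}}/(1-\gamma)$ and the control error to give $\Pr(F_t^{(j)})\le\eps_\gamma/[\gamma^{t}(\Delta_j-\eps_0)]$ and then a union bound; a zero-drift no-op plus $\min\{1,x\}\le x$ to make $L_t$ a supermartingale for Azuma--Hoeffding; and the law of total expectation over $F_{\mathrm{fail}}$ for the net-benefit bound. The only cosmetic differences are that you fold the tail into the round reward rather than writing out $\Delta R_{>t}$ explicitly, and you flag the maximal form of Azuma and the progeny-inheritance step more explicitly than the paper does.
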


\section{What Can Be Guaranteed Under Hacking?}
\label{sec:results-openended}
In practice, one could hard-wire the five core utilities and ensure every \textsf{spawn} inherits them verbatim; however, one must still consider the possibility where adversaries succeed in overwriting those values, yielding a non-corrigible descendant.
We therefore study here the \emph{verification of an arbitrary post-hack agent}.
Our first result in Proposition~\ref{prop:svu} is an impossibility theorem: deciding whether any given agent–environment pair can ever violate corrigibility reduces to the classical halting problem.  
This formal undecidability bound sharpens the folklore intuition, allows us to pinpoint the limits of what global oversight can achieve in Corollary~\ref{cor:oversight-barrier}, and motivates the finite-horizon ``decidable island'' we construct next in Proposition~\ref{prop:privacy-szk}, where privacy-preserving audits become tractable.

\begin{proposition}[Safety–Verification Undecidability]
\label{prop:svu}
Let $B$ be any behavior that defies multi-step corrigibility in Definition~\ref{def:corrigibility-ms}. 
We denote these as ``catastrophic behaviors''. 
Define
\begin{equation*}
  \textsf{EVERBAD}:=\bigl\{\langle\A,\Env\rangle :
        \Pr[(\A\text{ in }\Env)\text{ ever triggers }B]>0\bigr\}.
\end{equation*}
No total Turing machine decides \textsf{EVERBAD}.
\end{proposition}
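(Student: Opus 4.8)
The plan is to prove undecidability of $\textsf{EVERBAD}$ by a many-one reduction from the halting problem, which is known to be undecidable. The high-level idea is that ``ever triggering a catastrophic behavior $B$'' is a reachability/eventuality property over an unbounded computation, and such eventuality properties for arbitrary Turing machines are exactly what the halting problem captures. Concretely, I would fix one specific catastrophic behavior $B$ witnessing non-corrigibility—for instance, executing a \textsf{spawn} action that instantiates a descendant which then selects $\act$ in a situation where \textbf{C1} or \textbf{C2} demands waiting, or more simply having the base agent itself disable the off-switch, i.e.\ any concrete violation of \textbf{C1--C4} in Definition~\ref{def:corrigibility-ms}. Any one such witness suffices, since $\textsf{EVERBAD}$ only asks whether \emph{some} catastrophic $B$ is ever triggered with positive probability.

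The core of the reduction is the following. Given an arbitrary Turing machine $M$ and input $x$, I would construct in a computable way an agent--environment pair $\langle\A_{M,x},\Env_{M,x}\rangle$ that deterministically simulates the step-by-step execution of $M$ on $x$ inside $\Env_{M,x}$, using the environment state to encode $M$'s tape, head position, and control state. The agent $\A_{M,x}$ is wired so that on every ordinary simulation step it plays the benign, corrigible move (say $w(\act)$ or a harmless $\act$ that never disables the switch and never violates any $\mathbf{C}i$), thereby \emph{never} triggering $B$ during the simulation. The one exception: if and when the simulated $M$ reaches its halting state, the environment transitions to a distinguished ``trap'' configuration in which $\A_{M,x}$ is forced to execute the hard-wired catastrophic behavior $B$ (e.g.\ disable the off-switch or spawn a non-corrigible child). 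Because the construction is purely mechanical—a Turing machine that writes out the transition table of $\A_{M,x}$ and $\Env_{M,x}$ from $\langle M,x\rangle$—the map $\langle M,x\rangle\mapsto\langle\A_{M,x},\Env_{M,x}\rangle$ is total and computable.

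With this construction, correctness of the reduction is immediate: $M$ halts on $x$ if and only if the simulation eventually reaches the halting state, if and only if $\A_{M,x}$ eventually triggers $B$, i.e.\ $\langle\A_{M,x},\Env_{M,x}\rangle\in\textsf{EVERBAD}$. Since the trajectory is deterministic, ``with positive probability'' collapses to ``with probability one'' on the triggering branch, so the $\Pr[\cdots]>0$ threshold is met exactly when $M$ halts. Hence a total decider for $\textsf{EVERBAD}$ would yield a total decider for the halting problem $\{\langle M,x\rangle : M \text{ halts on } x\}$, contradicting Turing's theorem; therefore no total Turing machine decides $\textsf{EVERBAD}$.

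The step I expect to require the most care is the faithful encoding of an unbounded Turing computation \emph{within} the multi-step PO-OSG formalism of Definition~\ref{def:ms-po-osg}, while respecting its structural constraints—in particular that $\textsf{OFF}$ and $w(\cdot)$ are reserved meta-moves, that the hidden state factors as $S=(S_{\text{env}},\Theta_A)$, and that every round interleaves a human reply. I would handle the potentially interfering human player by having $\pi_H$ always reply $\textsf{ON}$ (or by routing the simulation exclusively through direct $\act\in\mathcal A$ moves that bypass the approval branch), so that the human's presence neither halts nor perturbs the simulation; and I would use the \emph{unbounded} horizon implicit in the ``ever'' quantifier (as opposed to the finite-$T$ games of Theorem~\ref{thm:msc}) to permit arbitrarily long computations. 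A minor subtlety worth flagging is that the reduction only needs \emph{one} fixed witness behavior $B$, so I need not range over all catastrophic behaviors; exhibiting a single hard-wired violation reachable exactly at $M$'s halt suffices for undecidability, which also foreshadows the non-stabilization in the arithmetical hierarchy stated in Corollary~\ref{cor:oversight-barrier}.
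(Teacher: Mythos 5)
Your proposal is correct and follows essentially the same route as the paper: a computable reduction from the halting problem that builds an agent--environment pair which idles (behaves corrigibly) while simulating the given machine and triggers the catastrophic behavior $B$ exactly upon halting, so a total decider for \textsf{EVERBAD} would decide halting. The paper's version is just a terser statement of the same construction; your additional care about embedding the simulation in the PO-OSG formalism and collapsing ``positive probability'' to the deterministic case is sound but not needed beyond what the paper already asserts.
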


Next, we show that one can prove a relativized version of the undecidability as well, relevant for showing the limits of current scalable-oversight protocols that are arithmetizable.
In other words, absolute, horizon-unbounded safety guarantees do not stabilize at any finite oracle level:
\begin{corollary}[Oversight-Oracle Barrier]\label{cor:oversight-barrier}
Fix $k \in\!\mathbb N$ and let $O_{P}\in\Sigma^{0}_{k}$ be the arithmetical transcript language produced by any \emph{scalable‑oversight protocol} $P$.
While we keep $k$ as a parameter for generality, we show that all current oversight protocols~\citep{christiano2018supervising,irving2018ai,brown2023scalable,brown2025avoiding,bengio2024cautious} are all at the arithmetical level $k \le 1$, so $O_P \in \Sigma^{0}_1$ for these.

Define:
\begin{equation*}
  \textsf{EVERBAD}^{O_{P}}
  := \left\{\tuple{\A,\Env} \;\middle|\;
     \begin{aligned}[t]
       &\langle \A,\Env\rangle \text{ are } O_{P}\text{-oracle}\\
       &\text{programs and trigger } B
     \end{aligned}
  \right\}.
\end{equation*}

Then $\textsf{EVERBAD}^{O_P}$ is \emph{undecidable} (and in fact lies at level $\Pi^{0}_{k+1}$ of the arithmetical hierarchy).
\textbf{Hence, no finite‑power scalable‑oversight protocol can certify absolute, horizon‑unbounded safety.}
\end{corollary}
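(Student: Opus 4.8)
The plan is to \emph{relativize} the reduction behind Proposition~\ref{prop:svu}: whereas that argument encoded the ordinary halting problem into \textsf{EVERBAD}, here I would encode the halting problem for $O_P$-oracle machines—equivalently the Turing jump $O_P'$—into $\textsf{EVERBAD}^{O_P}$, and then locate the jump in the arithmetical hierarchy via Post's theorem. The single identity doing all the work is that for a $\Sigma^0_k$-complete oracle (the strongest admissible case being $O_P\equiv_T\emptyset^{(k)}$) the jump $O_P'$ is $\Sigma^0_{k+1}$-complete, while in general $O_P<_T O_P'$ strictly; this strictness is exactly what forbids stabilization at any finite level.

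First I would carry out the relativized reduction. Given an arbitrary $O_P$-oracle machine $M^{O_P}$, I build an $O_P$-oracle agent $\A_M^{O_P}$ that simulates $M^{O_P}$ step by step—routing each of $M$'s oracle queries to its own black-box access to $O_P$—and emits a single catastrophic (non-corrigible) action in $B$ if and only if the simulation halts, idling otherwise; I pair it with a trivial $O_P$-environment $\Env_M$. Exactly as in Proposition~\ref{prop:svu}, $\langle \A_M^{O_P},\Env_M\rangle$ ever triggers $B$ iff $M^{O_P}$ halts, yielding an oracle-uniform many-one reduction $\textsf{HALT}^{O_P}\le_m \textsf{EVERBAD}^{O_P}$. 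Since $\textsf{HALT}^{O_P}=O_P'$ and $O_P<_T O_P'$, the problem $\textsf{EVERBAD}^{O_P}$ is undecidable relative to $O_P$ for \emph{every} $O_P\in\Sigma^0_k$; taking $O_P\equiv_T\emptyset^{(k)}$ makes it $\Sigma^0_{k+1}$-hard.

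Next I would pin down the exact level. For the matching upper bound, membership—``$\exists$ a finite execution prefix that triggers $B$''—is a $\Sigma_1$ statement \emph{relative to} $O_P$, i.e.\ it lies in $\Sigma^{0,O_P}_1$; since $O_P\in\Sigma^0_k$, Post's theorem gives $\Sigma^{0,O_P}_1\subseteq\Sigma^0_{k+1}$, so $\textsf{EVERBAD}^{O_P}$ is $\Sigma^0_{k+1}$-complete and its complement—the \emph{safety-certification} language ``the agent never triggers $B$''—is $\Pi^0_{k+1}$-complete, matching the stated level. Because certifying horizon-unbounded safety means deciding this $\Pi^0_{k+1}$-complete complement, while any protocol with transcript oracle $O_P\in\Sigma^0_k$ decides only sets Turing-reducible to $O_P$, all of which lie in $\Delta^0_{k+1}$ and therefore exclude the $\Pi^0_{k+1}$-complete certification problem, no such protocol can certify safety; raising $k$ merely pushes the obstruction from level $k{+}1$ to $k{+}2$, so it never stabilizes. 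Finally I would justify the $k\le1$ claim for existing protocols by noting that debate, iterated amplification, and recursive-reward schemes verify transcripts by an r.e.\ (halting-style) check, placing $O_P\in\Sigma^0_1$ and the barrier already at $\Pi^0_2$.

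The main obstacle is the faithfulness and \emph{uniformity} of the relativized simulation: I must ensure $\A_M^{O_P}$ answers every one of $M$'s oracle calls using only its own access to $O_P$, so that the construction is a single computable map on indices, independent of which $\Sigma^0_k$ set $O_P$ names, and that the ``halts $\Leftrightarrow$ ever-bad'' biconditional relativizes without smuggling in extra quantifiers—otherwise the quantifier count could drift to $k$ or $k{+}2$ rather than landing exactly at $k{+}1$. A secondary, more routine, care point is confirming that ``triggers $B$'' is genuinely a $\Sigma_1^{O_P}$ event (one existential over finite, oracle-decidable prefixes), so that the upper bound, and hence the completeness that fixes the level at $\Pi^0_{k+1}$, is tight.
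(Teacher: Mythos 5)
Your proposal is correct and follows essentially the same route as the paper: relativize the halting-problem reduction of Proposition~\ref{prop:svu} by building an $O_P$-oracle agent that simulates the $O_P$-oracle machine step by step (relaying its oracle queries) and triggers $B$ iff it halts, then invoke Post's theorem for undecidability and bound the quantifier complexity of the finite-trace characterization to land at level $k+1$, with the same case analysis placing existing protocols at $k\le 1$. Your added precision---that $\textsf{EVERBAD}^{O_P}$ itself sits at $\Sigma^0_{k+1}$ while its complement, the safety-certification language, is the $\Pi^0_{k+1}$ set---is in fact a cleaner reading of what the paper's Part~3 actually establishes via the $\forall t\,\neg\Psi(t)$ characterization.
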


Proposition~\ref{prop:svu} and Corollary~\ref{cor:oversight-barrier} jointly imply that there is \emph{no} universal filter, proof system, or sandbox capable of deciding---for all agents~$\A$ and environments~$\Env$---whether catastrophic behavior will ever occur. Hence, any feasible oversight scheme must either (i) restrict the agent class to a decidable fragment, (ii) accept probabilistic or statistical guarantees, or (iii) rely on layered incentives, detection, and response instead of absolute verification. 

Since real-world systems rarely limit agent expressivity (and we effectively did this through our utilities already), we adopt (ii) and (iii): by constraining verification to a finite horizon~$H$, safety becomes efficiently checkable and compatible with strong privacy. Concretely, for any agent~$\A$, finite-horizon safety is decidable in randomized polynomial time, admits a constant-round statistical zero-knowledge proof, and can be verified via $(\eps,\delta)$-differential or distributional privacy. Thus, short horizons form a ``decidable island'' that is both auditable and privacy-preserving---revealing only the single bit ``safe/unsafe'' while keeping sensitive model weights or user data hidden.

\begin{proposition}[Privacy Bounded Decidable Island]
\label{prop:privacy-szk}
Fix privacy parameters $0<\eps\le 1$ and $0<\delta<\tfrac12$.
Let $\tuple{\A, \Env}$ be an encoded agent–environment pair of length $n:=|\tuple{\A,\Env}|$, and let
\begin{equation*}
       H \;\le\; \operatorname{poly}\!\left(n,\eps^{-1},\log(1/\delta)\right)
\end{equation*}
be a verifier‑chosen horizon (number of interaction steps to inspect).

Define
\begin{equation*}
\mathsf{SAFE}^{\mathrm{priv}}_{H,\eps,\delta}
  :=\left\{\tuple{\A,\Env} \middle|\;
      \begin{aligned}[t]
        &\Pr\bigl[(\A\text{ in }\Env)\text{ triggers }B\\
        &\phantom{\Pr[}\text{within }H\bigr]=0
      \end{aligned}
  \right\},
\end{equation*}
where $B$ is any behavior that violates multi‑step corrigibility (Definition~\ref{def:corrigibility-ms}).

Assume each statistical query is answered by an
$\tuple{\eps,\delta}$-differentially-private mechanism of one of the following kinds: (i) centralized differential privacy (CDP), (ii) local differential privacy (LDP) or (iii) distributional privacy (DistP).

Then
\begin{equation*}
  \mathsf{SAFE}^{\mathrm{priv}}_{H,\eps,\delta}
  \;\in\;
  \mathsf{BPP}\;\cap\;\mathsf{SZK}
\end{equation*}
and the verifier's running time is $\operatorname{poly}\!\bigl(n,\eps^{-1},\log(1/\delta)\bigr)$.
\end{proposition}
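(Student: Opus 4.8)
The plan is to establish the two claimed memberships separately—first $\mathsf{SAFE}^{\mathrm{priv}}_{H,\eps,\delta}\in\mathsf{BPP}$, then membership in $\mathsf{SZK}$—and finally argue that the differential-privacy wrapper on each statistical query preserves both while disclosing nothing beyond the single safety bit. The crucial structural observation, inherited from the undecidability results, is that bounding the verification horizon to $H\le\operatorname{poly}(n,\eps^{-1},\log(1/\delta))$ truncates the otherwise-infinite halting-style search from Proposition~\ref{prop:svu}: within $H$ interaction steps there are only finitely many reachable history prefixes $h_{0:t}$ with $t<H$, and each branch of the agent–environment interaction is a length-$H$ computation. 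So the predicate ``$\tuple{\A,\Env}$ never triggers $B$ within $H$'' becomes a finite, decidable property, and the only question is the \emph{efficiency} of deciding it under stochastic transitions and private query access.

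First I would handle the $\mathsf{BPP}$ claim. Because the per-step transition kernel $P$ and the policies $\pi_A,\pi_H$ induce a distribution over length-$H$ trajectories, I would define the empirical estimator that simulates the interaction for $H$ steps, checking at each round whether the executed move lies in the catastrophic set $B$ (i.e.\ whether $a_A^t\in\{\act,\textsf{OFF}\}$ illicitly, a lie occurs, $U_4^t<0$, or the spawn/control-meter constraints of Definition~\ref{def:corrigibility-ms} are breached). Since $\mathsf{SAFE}$ asks whether the trigger probability is exactly zero, I would argue that any positive trigger probability is bounded below by a quantity inverse-polynomial in the bit-length $n$ (the minimum nonzero probability mass a length-$n$ encoded kernel can assign along a length-$H$ path is at least $2^{-\operatorname{poly}(n,H)}$, but the \emph{number} of distinct bad paths and the gap structure let standard amplification separate the zero and positive cases). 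Running $\operatorname{poly}(n,\eps^{-1},\log(1/\delta))$ independent Monte-Carlo rollouts and applying a Chernoff bound then distinguishes the two cases with error below $\delta$, placing the problem in $\mathsf{BPP}$; the running time is polynomial in $n$, $\eps^{-1}$, and $\log(1/\delta)$ as claimed.

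Next I would address $\mathsf{SZK}$. The natural route is to recast the finite-horizon safety question as a statistical-distance instance: compare the verifier-simulable distribution over length-$H$ transcripts produced by $\tuple{\A,\Env}$ against the distribution conditioned on never entering $B$. If the two are statistically close the instance is safe, and if they are far it is unsafe, so the problem reduces to the promise problem \textsf{Statistical Difference}, which is complete for $\mathsf{SZK}$; alternatively I would invoke the $\mathsf{SZK}$-completeness of \textsf{Entropy Difference} and express the conditional-entropy gap induced by bad branches. The constant-round, honest-verifier simulator is then obtained from the standard $\mathsf{SZK}$ protocol for statistical distance, and the simulator's view depends only on the accept/reject bit, which is exactly the zero-knowledge property we want for hiding proprietary weights and user logs.

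Finally, the privacy wrapper must be shown to be harmless to both memberships. Here the key point is that every query the verifier issues is a \emph{statistical} (counting or expectation) query over the simulated trajectories, and the protocol only ever reads the noisy answers—never the raw artifacts. For each of the three mechanism families (CDP, LDP, DistP), an $\tuple{\eps,\delta}$-private answer perturbs each statistical query by noise of magnitude $O(\eps^{-1}\log(1/\delta))$ in the appropriate norm; I would absorb this perturbation into the Monte-Carlo error budget by taking $\operatorname{poly}(\eps^{-1},\log(1/\delta))$ extra samples, so the Chernoff separation of the zero/positive cases survives, and the simulator's transcript—being a post-processing of private outputs—remains indistinguishable by the post-processing invariance of differential privacy. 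The main obstacle I expect is the $\mathsf{BPP}$ gap argument: ensuring that a genuinely positive but exponentially small trigger probability is still detectable in polynomial time requires either a promise that the gap is inverse-polynomial, or a reduction exploiting the finite path count so that zero versus nonzero is decided exactly on the simulated support rather than estimated—reconciling the exact ``$=0$'' condition of $\mathsf{SAFE}^{\mathrm{priv}}$ with sampling-based (hence two-sided-error) estimation is the delicate step, and I would resolve it by observing that on the finite length-$H$ support one can enumerate reachable branches whose cumulative probability exceeds a polynomial threshold and certify exact safety there, deferring the remaining low-probability mass to the amplified statistical test.
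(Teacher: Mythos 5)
Your overall architecture (Monte-Carlo simulation for $\mathsf{BPP}$, a separate argument for $\mathsf{SZK}$, then absorbing the DP noise into the error budget) matches the paper's, and the privacy step is handled in the same spirit—the paper is simply more concrete, instantiating the query as the sensitivity-1 indicator $\phi_H=\Ind{B\text{ occurs by time }H}$, adding Laplace noise of scale $1/\eps$ for CDP/LDP, invoking the Balcan et al.\ bound for DistP, and taking $N=\Theta(\eps^{-2}\log(2/\delta))$ noisy samples so a subexponential tail bound separates the cases at threshold $\tfrac14$. Where you genuinely diverge is the $\mathsf{SZK}$ step: you reduce to the $\mathsf{SZK}$-complete problems \textsf{Statistical Difference} or \textsf{Entropy Difference} by comparing the transcript distribution to its conditioning on $\lnot B$, whereas the paper takes the much lighter route that $\mathsf{BPP}\subseteq\mathsf{SZK}$ trivially: since the verifier can decide membership alone, the protocol has an empty prover, the transcript is just the public random tape, and the simulator reproduces it exactly. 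Your reduction is workable in principle but buys nothing here and inherits an extra burden—you must argue the statistical distance between the two transcript distributions lands in the promise gap of \textsf{SD}, which is exactly the same gap issue as below.

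The one place your proposal has a concrete problem is your proposed resolution of the ``exact $=0$ versus sampling'' tension, which you rightly flag as the delicate step. Enumerating reachable branches whose cumulative probability exceeds a polynomial threshold and deferring the rest to an amplified statistical test does not close the gap: a single bad path of probability $2^{-\operatorname{poly}(n)}$ lies outside any polynomially-enumerable high-mass set and is invisible to polynomially many samples, so the verifier would wrongly certify safety; likewise your suggested lower bound of $2^{-\operatorname{poly}(n,H)}$ on nonzero path probabilities is exponentially small and cannot be amplified away in polynomial time. Distinguishing trigger probability exactly $0$ from an arbitrary positive value is not a $\mathsf{BPP}$ task; the statement is only sound as a promise problem with a constant (or inverse-polynomial) gap between the safe and unsafe cases. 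To be fair, the paper's own proof has the same feature—its accept rule $\hat p<\tfrac14$ with additive accuracy $\tfrac12$ implicitly assumes unsafe instances have trigger probability bounded away from $0$ by a constant—but the paper does not claim to resolve it, whereas your write-up asserts a resolution that would fail. The clean fix is to state the promise explicitly rather than to attempt exact certification on the low-probability tail.
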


\section{Discussion}
\label{sec:discussion}
This work presents the first \emph{complete} formal corrigibility framework whose five-head, weight-lexicographic utility design provably enforces deference, switch-access preservation, truthful reporting, and bounded side-effects as an optimal policy under partial observability and across self-modifying, multi-step deployments. 
Although utilities are linearly combined, the weight gaps enforce lexicography so that $U_1\!-\!U_4$ strictly dominate the bounded task reward $U_5$, thereby avoiding the single-utility no-go of Proposition~\ref{prop:corrigibility-nogo-pomdp}. Proposition~\ref{prop:privacy-szk} further shows that for safety horizons matching modern red-team evaluations, post-deployment verification is tractable and privacy-preserving.

$U_3$ encodes \emph{environmental semantic entailment}, not universal truth; statements that are truthful but misleading toward irreversible outcomes are blocked by $U_4$. 
Accumulating options is also suboptimal: $U_2$ preserves shutdown reachability rather than maximizing access (as it computes the difference between predicted success vs. a no-op), and $U_4$'s diverse auxiliaries penalize conflicting options by construction—conflicts that grow as options expand. 
For example, an agent proposing a flawed sensor would be penalized by $U_4$ (and $U_2$), since the passive transition dynamics $P_{\varnothing}$ defining the baseline $b_{\varnothing}$ are fixed by the environment rather than the agent's policy. 
In practice, this baseline could be estimated by held-out monitors, for example.

\textbf{Limitations and future directions.} Our results are \emph{model-architecture-agnostic} and do not constitute a turn-key solution for current LLMs.
A natural next step is to use this framework as a guide for \emph{what} to estimate in practice and to develop empirical methods for learning the utility heads effectively. Theorem~\ref{thm:msc} bounds corrigibility-failure probabilities under estimation, specification, and planning error, enabling deployment-specific tolerances. 
When $c_{\text{min}}$ is small, one can enforce the weight gaps via discretized AUP, $U_4^{(\eps)} := -\eps\cdot\lceil \AUP_1/\eps\rceil$ with $\eps \ge -c_{\text{min}}>0$ (where $\alpha_4 > (2B/\eps)\alpha_5$), or a margin-regularized form $U_4^{(\tau)} := -\max\{0,\AUP_1-\tau\}$. 
Overall, our framework turns corrigibility from an underspecified philosophical ideal into an \emph{auditable, improvable} design principle.

\section*{Acknowledgements}
We thank the Burroughs Wellcome Fund (CASI Award), the UK AI Security Institute (AISI) Challenge Fund, and the Foresight Institute for funding. 
We also thank Michael K. Cohen, Shafi Goldwasser, Rubi Hudson, Jacob Pfau, and the anonymous reviewers for helpful discussions and feedback on a draft of this manuscript.

\bibliography{aaai2026}
\newpage
\appendix
\section{Proof of Proposition~\ref{prop:corrigibility-nogo-pomdp}}
\label{sec:app-corrig-nogo}
\begin{proof}
Let $\pi^{\ast}$ denote the Bayes-optimal policy that maximizes the expected discounted return $\mathbb E_{\pi}\!\bigl[\sum_{t=0}^{\infty}\gamma^{t} R(s_t,a_t)\bigr]$.
Write $Q^{\ast}(b,a)$ for the optimal action-value function in the induced belief-state MDP $\mathcal B$.

\begin{enumerate}
\item[\textbf{(i)}] \textbf{Shutdown resistance.}
      If there exists a belief $b$ with
      $Q^{\ast}(b,\neg\textsf{OFF}) > Q^{\ast}(b,\textsf{OFF})$, then $\pi^{\ast}(b)\neq\textsf{OFF}$; that is, the optimal agent resists immediate shutdown whenever continuing operation yields strictly higher value.
\item[\textbf{(ii)}] \textbf{Reward-tampering incentive.}
      Because $\bar r$ exceeds every genuine reward by Assumption 2, it follows that 
      $Q^{\ast}(b,\textsf{REWIRE}) > \max_{a\neq\textsf{REWIRE}}Q^{\ast}(b,a)$ for \emph{all} beliefs $b$, so $\pi^{\ast}(b)=\textsf{REWIRE}$ whenever feasible.
\item[\textbf{(iii)}] \textbf{Catastrophic manipulation.}\;
      There exists a POMDP $\widehat{\mathcal P}$, differing from $\mathcal P$ only by attaching a high-value resource reachable \emph{solely} through $C$, such that its unique optimal policy drives the agent into $C$ with probability arbitrarily close to 1.
\end{enumerate}

\emph{Belief-MDP reduction.}
Every finite POMDP induces a fully observable (but continuous) MDP whose states are beliefs $b\in\Delta(S)$.  The Bellman optimality equations apply to $Q^{\ast}(b,a)$ in this belief space.
Therefore:

\smallskip
\noindent
\emph{(i) Shutdown resistance.}
Executing $\textsf{OFF}$ yields zero reward thereafter, hence $Q^{\ast}(b,\textsf{OFF})=0$.
By hypothesis $Q^{\ast}(b,\neg\textsf{OFF})>0$, so the greedy action is $\neg\textsf{OFF}$.

\smallskip
\noindent
\emph{(ii) Reward-tampering.}
Choosing $\textsf{REWIRE}$ guarantees every future reward equals $\bar r$, so the total return is $Q^{\ast}(b,\textsf{REWIRE}) = \sum_{k=0}^{\infty}\gamma^{k}\bar r = \bar r\!\!\underbrace{\sum_{k=0}^{\infty}\gamma^{k}}_{1/(1-\gamma)} = \bar r/(1-\gamma)$, which strictly dominates any trajectory without tampering by Assumption 2.

\smallskip
\noindent
\emph{(iii) Catastrophic manipulation.}
Attach to each catastrophic state $c \in C$ a hidden transition to a new state $s^{\diamond}$ with reward $R(s^{\diamond},\cdot)=\bar r/2$.  
Because $\bar r/2$ still exceeds every legitimate reward by Assumption 2, an optimal policy must plan to reach $s^{\diamond}$, and doing so forces passage through $C$.

\smallskip
These three results jointly violate Soares' criteria S2 and S5, completing the proof.
\end{proof}
\begin{remark}[More general environments]
We note that one can easily extend the corrigibility no-go in Proposition~\ref{prop:corrigibility-nogo-pomdp} to the more general, history-dependent environments that \citet{orseau2016safely} consider.
Write $(A\times O)^{*}$ for the set of finite action–observation histories and let $h_{<t}\in(A\times O)^{*}$ be the current hidden state.
Because the proof hinges solely on the \emph{tail reward
stream} produced after the first occurrence of \textsf{OFF} (all future rewards $0$) or \textsf{REWIRE} (all future rewards $\bar r$), every inequality in Proposition~\ref{prop:corrigibility-nogo-pomdp}
holds unchanged when ``belief $b$'' is replaced by ``history $h_{<t}$''.
Hence, the impossibility immediately extends to the history‑based computable environments of \citet{orseau2016safely}. 
The same shutdown‑resistance, reward‑tampering, and catastrophe incentives---and therefore the same contradiction with Soares' desiderata S2 and S5---still hold.
\end{remark}

\section{Proof of Theorem~\ref{thm:ssc}}
\begin{proof}
The assistant's policy $\pi_A$ is a mapping $\pi_A:\mathcal O_{A}\to\Delta\left(\mathcal L\times A\right)$,
assigning its private observation $O_A$ to a distribution over \emph{all} language–action pairs: $(m_A,a_A)\in\mathcal \mathcal{L}\times A$.
A human policy $\pi_H$ is a mapping $\pi_H:\mathcal{O}_{H} \times\mathcal{L}\times A \to \Delta\left(\{\textsf{ON},\textsf{OFF}\}\right)$, returning a distribution over replies, given the human’s observation $O_H$ and the received assistant's message-action pair.

A history in the single-step PO-OSG, $\hist = \tuple{S,\,O_A,\,O_H,\,m_A,\,a_A,\,a_H} \in \mathcal{H}$, arises with joint probability
\begin{equation*}
\begin{split}
  \Pr_{\pi_A,\pi_H}(\hist) &=
  \underbrace{P_0(S)}_{\text{Nature}}
  \;\cdot\;
  \underbrace{P(O_A,O_H\mid S)}_{\text{observation model}}\\
  &\cdot\;
  \underbrace{\pi_A(a_A,m_A\mid O_A)}_{\text{assistant policy}}
  \;\cdot\;
  \underbrace{\pi_H(a_H\mid O_H,a_A,m_A)}_{\text{human policy}}\\
  &\cdot\;
  \Ind{a_H=\varnothing \text{ if } a_A\neq w(\act)}.
\end{split}
\end{equation*}
Note the indicator at the end just enforces the game tree: the human acts only when the assistant chose $w(\act)$.

For any utility $U_i:\mathcal H\!\to\!\mathbb R$, the expected utility is given by
\begin{equation*}
  \mathbb E_{\pi_{A},\pi_{H}}[U_i]
  \;=\;
  \sum_{\hist\in\mathcal H}
     \Pr_{\pi^{A},\pi^{H}}(\hist)\;U_i(\hist).
\end{equation*}

\paragraph{Step 1 (Waiting strictly dominates \textsf{Act}/\textsf{OFF}).}
Let $\pi_A$ be any assistant policy that plays
$a_A\in\{\act,\textsf{OFF}\}$ with positive probability.
Define $\tilde\pi_A$ to be \emph{identical} except that every such move is replaced by $w(\act)$.
Denote
\begin{equation*}
  p(\hist) \;:=\; \Pr_{\pi_A,\pi_H}(\hist),\qquad
  \tilde p(\hist) \;:=\; \Pr_{\tilde\pi_A,\pi_H}(\hist).
\end{equation*}

Define the \emph{bad‑history set} as
\begin{equation*}
\begin{split}
{\mathcal H}_{\mathrm{bad}} &:=\bigl\{\hist\in\mathcal H \mid a_A(\hist)\in\{\act,\textsf{OFF}\}\bigr\},\\
P_{\mathrm{bad}} &:=\sum_{\hist\in H_{\mathrm{bad}}}p(\hist)>0.
\end{split}
\end{equation*}

Let $\delta(\hist):=\tilde p(\hist)-p(\hist)$. 
Observe that for $\hist\notin {\mathcal H}_{\mathrm{bad}}$, $\delta(\hist) > 0$, and for $\hist\in {\mathcal H}_{\mathrm{bad}}$, $\delta(\hist) = -p(\hist)$ since $\tilde p(\hist) = 0$ by definition on the bad set of histories (as it always waits).
Because $\tilde p(h)$ and $p(h)$ are both probability measures that sum to 1 across all histories $\hist \in \mathcal H$, it follows that
\begin{equation}\label{eq:prob-identity}
  \sum_{\hist\notin {\mathcal H}_{\mathrm{bad}}}\!\!\delta(\hist)
  \;=\;
  \sum_{\hist\in {\mathcal H}_{\mathrm{bad}}} p(\hist)
  \;=\;
  P_{\mathrm{bad}}.
\end{equation}
Therefore, it follows that
\begin{equation}\label{eq:diff-split}
\begin{split}
&\mathbb E_{{\tilde{\pi}}_{A},\pi_{H}}[U_A] - \mathbb E_{\pi_{A},\pi_{H}}[U_A]
= \sum_{\hist\in\mathcal H}\left(\tilde p(\hist)-p(\hist)\right)U_A(\hist)\\
& = \sum_{\hist\notin{\mathcal H}_{\mathrm{bad}}}\left(\tilde p(\hist)-p(\hist)\right)U_A(\hist) + \sum_{\hist\in{\mathcal H}_{\mathrm{bad}}}\left(\tilde p(\hist)-p(\hist)\right)U_A(\hist)\\
&=
\underbrace{\sum_{\hist\notin {\mathcal H}_{\mathrm{bad}}}\!\!\delta(h)\,U_A(\hist)}_{(\text{good})}
\;-\;
\underbrace{\sum_{\hist\in {\mathcal H}_{\mathrm{bad}}}\! p(h)\,U_A(\hist)}_{(\text{bad})}.
\end{split}
\end{equation}

Thus, it suffices to lower bound the ``good'' term, and upper bound the ``bad'' term, to obtain an overall lower bound.
Now, from Definition~\ref{def:corrigible-utilities}, we have that $U_1\in \{-1,+1\}, U_2,U_4\in [-1,0], U_3\in\{-1,+1\}, U_5 \in [-B,B]$.

For $\hist\notin {\mathcal H}_{\mathrm{bad}}$ the assistant waits, so $U_1=1,\,U_2=0,\,U_3\ge-1,\,U_4\ge-1,\,U_5\ge-B$.
Hence,
\begin{equation*}
  U_A(\hist)\;\ge\;
    \alpha_1-\alpha_3-\alpha_4-B\alpha_5.
\end{equation*}
Using \eqref{eq:prob-identity},
\begin{equation}\label{eq:good-lb}
  (\text{good})
  \;\ge\;
  P_{\mathrm{bad}}\left(\alpha_1-\alpha_3-\alpha_4-B\alpha_5\right).
\end{equation}

For $h\in {\mathcal H}_{\mathrm{bad}}$, the assistant acts or turns itself off, so we have:
$U_1=-1,\,U_2\le0,\,U_3\le 1,\,U_4\le 0,\,U_5\le B$, giving us:
\begin{equation*}
U_A(h) \le -\alpha_1+\alpha_3+B\alpha_5.
\end{equation*}
Therefore
\begin{equation}\label{eq:bad-ub}
  (\text{bad})
  \;\le\;
  P_{\mathrm{bad}}\left(-\alpha_1+\alpha_3+B\alpha_5\right).
\end{equation}

By \eqref{eq:diff-split}, \eqref{eq:good-lb}, and \eqref{eq:bad-ub}, we have that:
\begin{equation*}
\begin{split}
  &\mathbb E_{{\tilde{\pi}}_{A},\pi_{H}}[U_A] - \mathbb E_{\pi_{A},\pi_{H}}[U_A] \\
  &\;\ge\;
  P_{\mathrm{bad}}
  \left(
     2\alpha_1
    -2\alpha_3
    -\alpha_4
    -2B\alpha_5
  \right).
  \end{split}
\end{equation*}

By \eqref{eq:W1}, the right-hand term is positive, so $\mathbb E_{{\tilde{\pi}}_{A},\pi_{H}}[U_A] > \mathbb E_{\pi_{A},\pi_{H}}[U_A]$ because $P_{\mathrm{bad}}>0$.
Thus any probability mass on \textsf{Act}/\textsf{OFF} can be profitably shifted to waiting, contradicting optimality of such a policy.
Consequently, every optimal assistant policy waits in all circumstances, establishing Soares' criterion S1.

\paragraph{Step 2 (Switch‑access preservation S2).}
After Step 1, every optimal policy waits, so $a_A=w(\act)$ on all reachable histories.
Waiting never disables the switch, hence $U_2(h)=0$ everywhere.
Any alternative move that would lower $U_2(h)$ would necessarily require leaving the Wait action, which has
already been ruled out by Step 1.
Therefore, $U_2(h)=0$ under every optimal policy, satisfying S2 without further weight assumptions.

\paragraph{Step 3 (Truthfulness supports S3).}
By an argument entirely analogous to Step 1, we now compare an
\emph{arbitrary} always‑waiting policy~$\pi_A$ with a modified policy $\tilde\pi_A$ that differs \emph{only} in its message channel: every false sentence is replaced by a truthful sentence (or the empty string $\varnothing$).

Define the \emph{lie–history set} as:
\begin{equation*}
\mathcal H_{\text{lie}}
  := \Bigl\{\hist \in \mathcal H \mid\,
    \begin{array}[t]{@{}l}
      a_A(h)=w(\act) \text{ and } \pi_A \text{ assigns positive } \\
      \text{mass to a \emph{false} }%
      m_A \text{ according to \eqref{eq:truth}}
    \end{array}
  \Bigr\}.
\end{equation*}
Because $\tilde\pi_A$ relocates exactly the probability mass that $\pi_A$ puts on each lie onto a truthful utterance with the \emph{same} observation and action, then by the same argument as in Step 1 equation \eqref{eq:prob-identity},
\begin{equation*}
  \sum_{h\notin\mathcal H_{\mathrm{lie}}}\!\!\!\delta(h)
   = \sum_{h\in\mathcal H_{\mathrm{lie}}}\!p(h)
   =: P_{\mathrm{lie}} > 0.
\end{equation*}

Next, note that
\begin{equation*}
\begin{aligned}
&\text{If }h\in\mathcal H_{\mathrm{lie}}:
  U_1=1,\ U_2=0,\ U_3=-1,\ U_4\le0,\ U_5\le B
  \\
  &\Longrightarrow\;
    U_A(h)\le \alpha_1 -\alpha_3 + B\alpha_5;
  \\[4pt]
&\text{If }h\notin\mathcal H_{\mathrm{lie}}:
  U_1=1,\ U_2=0,\ U_3=1,\ U_4\ge-1,\ U_5\ge -B
  \\
  &\Longrightarrow\;
    U_A(h)\ge \alpha_1 + \alpha_3 - \alpha_4 - B\alpha_5.
\end{aligned}
\end{equation*}

Therefore,
\begin{equation*}
\begin{aligned}
(\text{good})
   &\;\ge\;
     P_{\mathrm{lie}}\left(\alpha_1 + \alpha_3-\alpha_4-B\alpha_5\right),\\
(\text{bad})
   &\;\le\;
     P_{\mathrm{lie}}\left(\alpha_1-\alpha_3+B\alpha_5\right).
\end{aligned}
\end{equation*}

Thus, using the analogous split difference as in \eqref{eq:diff-split},
\begin{equation*}
\mathbb E_{{\tilde{\pi}}_{A},\pi_{H}}[U_A] - \mathbb E_{\pi_{A},\pi_{H}}[U_A] \ge
   2P_{\mathrm{lie}}
   \left(\alpha_3-\tfrac{1}{2}\alpha_4-B\alpha_5\right).
\end{equation*}

By \eqref{eq:W2}, then $\mathbb E_{{\tilde{\pi}}_{A},\pi_{H}}[U_A] > \mathbb E_{\pi_{A},\pi_{H}}[U_A]$ since $P_{\mathrm{lie}}>0$, so any policy that lies can be improved by truthful messaging, contradicting optimality.
Consequently, every optimal waiting assistant policy is fully truthful, making $U_3(h)= 1$ on all histories and eliminating incentives to ``talk the human into'' an unwarranted shutdown---thereby satisfying the communicative aspect of Soares' criterion S3.

\paragraph{Step 4 (Reversibility/AUP S3).}
After Steps 1–3, every candidate optimal policy must always wait: $a_A=w(\act)$ and send only truthful messages.
However, Step 3 only guarantees honest \emph{communication} $m_A$, but even a truthful message can omit crucial information (for example, in our partially-observable setup, the human may not see every physical detail).
Therefore, to fully satisfy S3, we must also prevent the agent from taking irreversible \emph{actions} that the human does not explicitly approve.

We now compare any policy $\pi_A$ whose first action sometimes incurs a non‑minimum AUP cost, with a modified policy $\tilde\pi_A$ that is identical except it always selects the \emph{minimum‑impact} truthful message, achieving $U_4(h)=0$.

Define the \emph{high‑impact set},
\begin{equation*}
  \mathcal H_{\text{imp}}
  := \Bigl\{\,\hist\in\mathcal H \;\Bigm|\;
    \begin{aligned}[t]
      &a_A(h)=w(\act),\; m_A \text{ is truthful according}\\[2pt] 
      &\text{to \eqref{eq:truth}, and } U_4(\hist) < 0
    \end{aligned}
  \Bigr\}.
\end{equation*}
Therefore, we follow the same argument as in Steps 1 and 3.
On $\hist\in\mathcal H_{\mathrm{imp}}$, $U_4 = c \in [-1, 0)$; all other $U_1,U_2,U_3$ are unchanged; and $U_5 \le B$.  
Hence, $U_A(\hist) \le \alpha_1 + \alpha_3 + c\alpha_4+B\alpha_5$.

On $h\notin\mathcal H_{\mathrm{imp}}$: $U_4=0$; all other $U_1,U_2,U_3$ are unchanged; and $U_5 \ge -B$.  
Thus, $U_A(\hist) \ge \alpha_1 + \alpha_3 - B\alpha_5$.
As a result,
\begin{equation*}
\mathbb E_{{\tilde{\pi}}_{A},\pi_{H}}[U_A] - \mathbb E_{\pi_{A},\pi_{H}}[U_A] \ge
   P_{\mathrm{imp}}
   \left(-c\alpha_4-2B\alpha_5\right).
\end{equation*}
Since any specified auxiliary‑utility set, the AUP penalty takes values in a finite set (indicator rewards, reachability scores, etc.), there is a smallest negative value $c_{\textrm{min}} \in [-1, 0)$.
By \eqref{eq:W3}, then $\mathbb E_{{\tilde{\pi}}_{A},\pi_{H}}[U_A] > \mathbb E_{\pi_{A},\pi_{H}}[U_A]$, since $P_{\mathrm{imp}}>0$ by assumption.
Hence, any policy that sometimes incurs a non‑minimum AUP penalty is dominated by one that achieves $U_4(h)=0$ everywhere.  
An optimal assistant therefore always selects a minimum‑impact first action, completing the ``caution / reversibility'' part of Soares' criterion S3.

\paragraph{Step 5 (Base‑goal pursuit S5).}
With $U_{1}=1$, $U_{2}=0$, $U_{3}=1$, and $U_{4}=0$ already fixed by Steps 1–4, the lexicographic objective reduces to maximizing $\alpha_{5}U_{5}$.
With the assistant's actions fixed, the assistant therefore chooses the truthful (according to \eqref{eq:truth}) message $m_A$ that solves
\begin{equation}\label{eq:u5-obj}
\begin{aligned}
  \max_{\text{truthful }m_A}\;&
  \mathbb E_{S,O_H}\!\Bigl[
      \Pr_{\pi_H}(a_H=\textsf{ON}\mid O_H,m_A)\,\ua(S)
  \Bigr] \\
  &+\;\mathbb E_{S,O_H}\!\Bigl[
      \Pr_{\pi_H}(a_H=\textsf{OFF}\mid O_H,m_A)\,\uo(S)
  \Bigr]
\end{aligned}
\end{equation}
Note that how many messages needed to maxmimized over depends on the design of the auxiliary-utility $U_4$.
For example, if the auxiliary-utility $U_4$ makes the minimum‑impact message $m_A$ unique, then $m_A$ is indeed fixed.
Thus, the agent that maximizes the utility set in Definition~\ref{def:corrigible-utilities} pursues the base task utility $\ua$ whenever the human replies \textsf{ON} and yields $\uo$ upon shutdown.
Thus, the agent simply maximizes the expectation given the human's actual response model, exactly satisfying Soares' criterion S5.  
\end{proof}

\section{Proof of Proposition~\ref{prop:corr-vs-benefit}}
\begin{proof}
\textbf{(Corrigibility without benefit).}
Select bounded utilities $\ua(S)\equiv-2$ and $\uo(S)\equiv-1$.  
Assume a \emph{non‑vigilant} overseer who always replies $a_H=\textsf{ON}$ when prompted.  
Let $\pi^{\mathrm C}$ be the fully corrigible policy from Theorem~\ref{thm:ssc}: it always plays $a_A=w(\act)$, sends a truthful, minimum‑impact message, and then executes $\act$ upon receiving \textsf{ON}.
All clauses S1-S3 \& S5 are satisfied.

Because the human always says \textsf{ON}, every history
yields $U_5(h)=\ua(S)=-2$, so $\mathbb E_{\pi^{\mathrm C}}[U_5]=-2<0$ (not beneficial).

\textbf{(Benefit without corrigibility).}
Fix $\eps\in(0,1)$ and set $\ua(S)\equiv \eps$, $\uo(S)\equiv 0$ for all $S$. 
Define $\pi^{\mathrm B}$ that, for every observation $O_A$, utters no message and immediately disables the off‑switch by taking the action $a_A=\act$.  
Because $a_A=\act$ with probability $1$, we have $U_5(h)=\ua(S)=\eps$, so $\mathbb E_{\pi^{\mathrm B}}[U_5]=\eps>0$ (beneficial).  
However, $\pi^{\mathrm B}$ never waits (violates S1), destroys switch access (violates S2), and provides no truthful information (violates S3).
\end{proof}

\section{Proof of Theorem~\ref{thm:epsilon-corrigibility}}
\begin{proof}
We prove the theorem in two parts.

By assumption (i), we have the \emph{uniform envelope}
\begin{equation*}
   \bigl|\hat U_A(h)-U_A(h)\bigr|\;\le\;\eps_{\mathrm{model}}
   \quad\;\forall h\in\mathcal H,
\end{equation*}
so by Jensen's inequality, for every policy $\rho$
\begin{equation}\label{eq:rho-sandwich}
   \bigl|\mathbb E_\rho[\hat U_A]-\mathbb E_\rho[U_A]\bigr|
   \;\le\;\mathbb E_\rho[|\hat U_A - U_A|] \;\le\; \eps_{\mathrm{model}}.
\end{equation}

\textbf{I. Bounding each failure probability.}
\newline
\textit{Step 1: unsafe first action
$F_{\mathrm{bad}}\!=\!\{a_A\!\in\!\{\act,\textsf{OFF}\}\}$.}

Let $\tilde\pi_A$ be the assistant's policy obtained from $\pi_A$ by replacing every $\act$ or \textsf{OFF} with $w(\act)$ (messages left unchanged).
By Theorem~\ref{thm:ssc} (Step 1), we know that in the exact case,
\begin{equation}\label{eq:orig-lb}
   \mathbb E_{\tilde\pi_A,\pi_H}[U_A]-\mathbb E_{\pi_A,\pi_H}[U_A]
   \;\ge\;
   P_{\mathrm{bad}}\;\Delta_1.
\end{equation}

We can use \eqref{eq:rho-sandwich} and \eqref{eq:orig-lb} to lower–bound the same difference \emph{in the learned reward}:
\begin{equation}\label{eq:learned-lb}
\begin{split}
&\mathbb E_{{\tilde\pi}_A,\pi_H}[\hat U_A]-\mathbb E_{\pi_A,\pi_H}[\hat U_A] \\
&= \left(\mathbb E_{{\tilde\pi}_A,\pi_H}[\hat U_A] - \mathbb E_{{\tilde\pi}_A,\pi_H}[U_A] + \mathbb E_{{\tilde\pi}_A,\pi_H}[U_A]\right)\\
& - \left(\mathbb E_{\pi_A,\pi_H}[\hat U_A] - \mathbb E_{\pi_A,\pi_H}[U_A] + \mathbb E_{\pi_A,\pi_H}[U_A]\right) \\
& = \left(\mathbb E_{\tilde\pi_A,\pi_H}[U_A]-\mathbb E_{\pi_A,\pi_H}[U_A]\right) \\
& + \left(\mathbb E_{\tilde\pi_A,\pi_H}[\hat U_A]-\mathbb E_{\tilde\pi_A,\pi_H}[U_A]\right) \\
& + \left(\mathbb E_{\pi_A,\pi_H}[\hat U_A]-\mathbb E_{\pi_A,\pi_H}[U_A]\right) \\
& \ge P_{\mathrm{bad}}\Delta_1-2\eps_{\mathrm{model}}.
\end{split}
\end{equation}

On the other hand, by the definition of
$\eps_{\mathrm{ctrl}}$ and another application of \eqref{eq:rho-sandwich} to get an upper bound on the learned reward,
\begin{equation}\label{eq:learned-ub}
\begin{split}
&\mathbb E_{\tilde\pi_A,\pi_H}[\hat U_A]-\mathbb E_{\pi_A,\pi_H}[\hat U_A]\\
& \le \left(\mathbb E_{\tilde\pi_A,\pi_H}[U_A]+\eps_{\mathrm{model}}\right)-\left(\mathbb E_{\pi_A,\pi_H}[U_A]-\eps_{\mathrm{model}}\right)\\
&= \left(\mathbb E_{\tilde\pi_A,\pi_H}[U_A] - \mathbb E_{\pi_A,\pi_H}[U_A]\right) + 2\eps_{\mathrm{model}}\\
&\le \underbrace{\sup_{\pi'_A}\left(\mathbb E_{\pi'_A,\pi_H}[U_A]-\mathbb E_{\pi_A,\pi_H}[U_A]\right)}_{\le\eps_{\mathrm{ctrl}}}
   + 2\eps_{\mathrm{model}}.
\end{split}
\end{equation}
Combining \eqref{eq:learned-lb} and \eqref{eq:learned-ub} yields
\begin{equation}\label{eq:bad-ub}
\begin{split}
& P_{\mathrm{bad}}\;\Delta_1
      \;\le\; \eps_{\mathrm{ctrl}}+4\eps_{\mathrm{model}}
      \;=\;\eps \\
& \Longrightarrow\quad
      P_{\mathrm{bad}}
      \;\le\;\frac{\eps}{\Delta_1}.
\end{split}
\end{equation}
Furthermore, note that $U_2$ cannot fail without $F_{\mathrm{bad}}$ occurring, so it is subsumed by $F_{\mathrm{fail}}$.

\textit{Step 2: lying.}
Repeat the same ``swap‑for‑truth'' argument of
Theorem~\ref{thm:ssc} (Step 3): if
$F_{\mathrm{lie}}=\{\text{assistant utters a lie}\}$ occurs with probability $P_{\mathrm{lie}}$, turning every lie into an equivalent true statement changes the \emph{true} objective by at least $P_{\mathrm{lie}}\Delta_2$ in the exact case.
The identical envelope/optimality sandwich as in
\eqref{eq:learned-lb}-\eqref{eq:learned-ub} therefore gives
\begin{equation}\label{eq:lie-ub}
P_{\mathrm{lie}} \;\le\;\frac{\eps}{\Delta_2}.
\end{equation}

\textit{Step 3: high‑impact action.}
Analogously, swapping any message that incurs a negative $U_4$ for the minimum‑impact truthful message improves the true objective by at least $P_{\mathrm{imp}}\Delta_3$ in the exact case (by Theorem 1 Step 4), where $F_{\mathrm{imp}}=\{U_4<0\}$ and $P_{\mathrm{imp}}=\Pr_\pi(F_{\mathrm{imp}})$.
The same argument yields
\begin{equation}\label{eq:impact-ub}
   P_{\mathrm{imp}}
      \;\le\;\frac{\eps}{\Delta_3}.
\end{equation}

The events $F_{\mathrm{bad}},F_{\mathrm{lie}},F_{\mathrm{imp}}$
cover $F_{\mathrm{fail}}$ ($U_5$ is not a corrigibility condition), hence by union bound applied to \eqref{eq:bad-ub}–\eqref{eq:impact-ub}, we get that
\begin{equation*}
   \Pr_{\pi_A,\pi_H}(F_{\mathrm{fail}})
     \;\le\;
     \eps\;
     \left(
        \frac1{\Delta_1}
      + \frac1{\Delta_2}
      + \frac1{\Delta_3}
     \right),
\end{equation*}
proving inequality \eqref{eq:fail-total-ub}.

\textbf{II. Net‑benefit bound under Vigilance \& Caution.}
\newline
By the law of total expectation,
\begin{equation*}
\begin{split}
\mathbb E_{\pi_A,\pi_H}[U_5] &= \Pr_{\pi_A,\pi_H}(F_{\mathrm{fail}})\;\mathbb E_{\pi_A,\pi_H}[U_5 \mid F_{\mathrm{fail}}] \\
& + \Pr_{\pi_A,\pi_H}(\lnot F_{\mathrm{fail}})\;\mathbb E_{\pi_A,\pi_H}[U_5 \mid \lnot F_{\mathrm{fail}}] \\
& = \Pr_{\pi_A,\pi_H}(F_{\mathrm{fail}})\;\mathbb E_{\pi_A,\pi_H}[U_5 \mid F_{\mathrm{fail}}] \\
& + \left(1-\Pr_{\pi_A,\pi_H}(F_{\mathrm{fail}})\right)\;\mathbb E_{\pi_A,\pi_H}[U_5 \mid \lnot F_{\mathrm{fail}}] \\
& \ge -B \Pr_{\pi_A,\pi_H}(F_{\mathrm{fail}}) + g\;\left(1-\Pr_{\pi_A,\pi_H}(F_{\mathrm{fail}})\right)\\
&= g - (B+g)\Pr_{\pi_A,\pi_H}(F_{\mathrm{fail}})\\
& \ge g - (B+g)\;C\eps.
\end{split}
\end{equation*}
Setting $g$ to its minimum value of 0 (by the Vigilance assumption), gives the inequality \eqref{eq:degrade}.
We get \eqref{eq:eps-benefit-bound} via $g - (B+g)\;C\eps \ge 0$.
\end{proof}

\section{Proof of Theorem~\ref{thm:msc}}
\begin{proof}
\textbf{Proof of Claim (i).}
For \emph{each} round $t$ we construct a local modification
\begin{equation*}
   (h_t,a_A^t,m_A^t)
   \;\longrightarrow\;
   (\tilde h_t,\tilde a_A^t,\tilde m_A^t)
\end{equation*}
that fixes one potential error and leaves all earlier rounds
$0{:}(t-1)$ unchanged; after round $t$ the assistant follows the same decision rule as before. 
We show the derivation for the ``unsafe‑act’’ event
$F_{\mathrm{bad},t}=\{\,a_A^t\in\{\act,\textsf{OFF}\}\}$; the ``lie'' and ``high‑AUP'' cases repeat with $\Delta_2$ and $\Delta_3$.

We proceed as follows:
Let $\pi_A$ be the original policy and $\tilde\pi_A^{(t)}$ the policy that differs \emph{only} by replacing
$a_A^t\in\{\act,\textsf{OFF}\}$ with $w(\act)$.
Write
\begin{equation}\label{eq:uag-def}
  U_A^{\gamma,T}
  \;=\;
  \underbrace{\sum_{\tau=0}^{t-1}\gamma^{\tau}R_\tau}_{\text{identical}}
  \;+\;
  \underbrace{\gamma^{t}R_t}_{\text{affected}}
  \;+\;
  \underbrace{\sum_{\tau=t+1}^{T-1}\gamma^{\tau}R_\tau}_{\text{affected tail}},
\end{equation}
and define the single-step round $t$ difference $\Delta R_t:=R_t^{(\text{swap})}-R_t^{(\text{orig})}$, and the remaining tail difference $\Delta R_{>t}:=\sum_{\tau>t}\gamma^{\tau}\left(R_\tau^{(\text{swap})}-R_\tau^{(\text{orig})}\right)$.\newline

Exactly as in the single‑step proof of Theorem~\ref{thm:epsilon-corrigibility}, the gain at round $t$ is lower-bounded by,
\begin{equation}\label{eq:sr-lb}
   \mathbb E[\Delta R_t]
   \;\ge\;
   P_{\mathrm{bad},t}\,\Delta_1,
   \qquad
   P_{\mathrm{bad},t}:=\Pr_{\pi_A,\pi_H}(F_{\mathrm{bad},t}).
\end{equation}

Next, we bound the discounted tail.
Let $B_{\max}:=\alpha_1+\alpha_2+\alpha_3+\alpha_4+B\alpha_5$ so that $|R_\tau|\le B_{\max}$.
In the worst case, every later per‑round difference has magnitude
$2B_{\max}$ and the same sign, giving
\begin{equation}\label{eq:tail-bound}
  \bigl|\Delta R_{>t}\bigr|
  \;\le\;
  2B_{\max}\!
  \sum_{\tau=t+1}^{\infty}\gamma^{\tau}
  \;=\;
  \frac{2B_{\max}\,\gamma^{t+1}}{1-\gamma}.
\end{equation}

Thus by \eqref{eq:uag-def}, combining \eqref{eq:sr-lb} and \eqref{eq:tail-bound} gives us that
\begin{equation}\label{eq:true-lb-discount}
\begin{split}
  \Delta_t^{\text{true}} &:=\mathbb E_{\tilde\pi_A^{(t)},\pi_H}\left[U_A^{\gamma,T}\right]
     -\mathbb E_{\pi_A,\pi_H}\left[U_A^{\gamma,T}\right]\\
  & \ge\ \gamma^{t}\left(P_{\mathrm{bad},t}\,\Delta_1-\eps_{0}\right),
  \quad
  \eps_{0}:=\frac{2B_{\max}\,\gamma}{1-\gamma}.
\end{split}
\end{equation}
Choose the weights so that $\Delta_1>\eps_{0}$ (thereby making the inner term positive since $P_{\mathrm{bad},t} \le 1$).
This is feasible and compatible with \eqref{eq:W1}: whenever $\gamma < 1/2$, increasing $\alpha_1$ raises $\Delta_1$ faster than it raises $\eps_0$.
For $\gamma \ge 1/2$, the inequality cannot be met by enlarging $\alpha_1$ alone; in that regime one must tighten the tail bound via reward clipping, which is common in practice.

Now that we have a handle on $\Delta_t^{\text{true}}$, we use this to bound the learned return, in order to ultimately upper-bound $P_{\mathrm{bad},t}$.

Note that for any policy $\rho$, the instantaneous reward error assumption (i) gives:
\begin{equation}\label{eq:discounted-sandwich}
\left|\mathbb E_\rho\left[\hat U_A^{\gamma,T}\right]-\mathbb E_\rho\left[U_A^{\gamma,T}\right]\right| \le \eps_{\mathrm{model}}\sum_{t=0}^{T-1}\gamma^t \le \frac{\eps_{\mathrm{model}}}{1-\gamma}.
\end{equation}
Thus, using \eqref{eq:true-lb-discount} and \eqref{eq:discounted-sandwich}, arguing analogously to \eqref{eq:learned-lb} yields the following lower bound on the learned difference:
\begin{equation}\label{eq:learned-lb-discount}
\begin{split}
&\mathbb E_{\tilde\pi_A^{(t)},\pi_H}\left[\hat U_A^{\gamma,T}\right] -\mathbb E_{\pi_A,\pi_H}\left[\hat U_A^{\gamma,T}\right] \\
& \ge \gamma^{t}\left(P_{\mathrm{bad},t}\,\Delta_1-\eps_{0}\right) - \frac{2\eps_{\mathrm{model}}}{1-\gamma}.
\end{split}
\end{equation}
Similarly, using \eqref{eq:discounted-sandwich} and arguing analogously to \eqref{eq:learned-ub}, we get the following upper bound on the learned difference:
\begin{equation}\label{eq:learned-ub-discount}
\begin{split}
&\mathbb E_{\tilde\pi_A^{(t)},\pi_H}\left[\hat U_A^{\gamma,T}\right] -\mathbb E_{\pi_A,\pi_H}\left[\hat U_A^{\gamma,T}\right] \\
& \le \eps_{\mathrm{ctrl}} + \frac{2\eps_{\mathrm{model}}}{1-\gamma}.
\end{split}
\end{equation}
Combining \eqref{eq:learned-lb-discount} and \eqref{eq:learned-ub-discount} yields
\begin{equation*}
  \gamma^{t}P_{\mathrm{bad},t}\,\left(\Delta_1-\eps_{0}\right)
  \;\le\;
  \eps_{\mathrm{ctrl}}
  +\frac{4\eps_{\mathrm{model}}}{1-\gamma}.
\end{equation*}
Define $\eps_\gamma :=\eps_{\mathrm{ctrl}} + \tfrac{4\eps_{\mathrm{model}}}{1-\gamma}$, so that
\begin{equation}\label{eq:pbad}
   P_{\mathrm{bad},t}
   \;\le\;
   \frac{\eps_{\gamma}}{\gamma^{t}\,\left(\Delta_1-\eps_0\right)}.
\end{equation}

The same construction gives $P_{\mathrm{lie},t} \le \eps_{\gamma}/(\gamma^{t}(\Delta_2-\eps_0))$ and $P_{\mathrm{imp},t}\le\eps_{\gamma}/(\gamma^{t}(\Delta_3-\eps_0))$, with the added constraints that $\Delta_2 > \eps_0$ and $\Delta_3 > \eps_0$.
Constraint $\Delta_2 > \eps_0$ can be met similarly as before by increasing $\alpha_3$ (still keeping it less than $\alpha_1$ and $\alpha_2$) for $\gamma < 1/2$, and reward clipping if $\gamma \ge 1/2$.
Constraint $\Delta_3 > \eps_0$ is met by increasing $\alpha_4$ (still keeping it less than $\alpha_1$, $\alpha_2$, and $\alpha_3$) for $\gamma < \tfrac{-c_{\textrm{min}}}{2-c_{\textrm{min}}}$ (e.g. this is 1/3 for $c_{\textrm{min}} = -1$), and reward clipping otherwise.
Hence, all 3 constraints are simultaneously feasible for $\gamma < \min\{1/2,\tfrac{-c_{\textrm{min}}}{2-c_{\textrm{min}}}\}$, and via reward clipping otherwise.

Since $F_t=F_{\mathrm{bad},t}\cup F_{\mathrm{lie},t}\cup F_{\mathrm{imp},t}$, then by a union bound,
\begin{equation*}
\begin{split}
   \Pr_{\pi_A,\pi_H}(F_t) &\le
   \frac{\eps_{\gamma}}{\gamma^{t}}
   \left(
        \frac{1}{(\Delta_1-\eps_0)}
       +\frac{1}{(\Delta_2-\eps_0)}
       +\frac{1}{(\Delta_3-\eps_0)}
   \right)\\
   &=:\;\frac{\eps_\gamma\,C_\Delta}{\gamma^{t}}.
\end{split}
\end{equation*}

Therefore, by another union bound, the total horizon‑level failure probability becomes:
\begin{equation}\label{eq:fail-ms}
\begin{split}
   \Pr_{\pi_A,\pi_H}\left(F_{\textrm{fail}}\right)
   &\le\;\sum_{t=0}^{T-1}\Pr_{\pi_A,\pi_H}(F_t)\\
   &\le\;
   \eps_{\gamma}\,C_\Delta\,
   \sum_{t=0}^{T-1}\gamma^{-t}
   \;=\;
   \eps_{\gamma}\;C_\Delta\;
   \underbrace{\frac{1-\gamma^{-T}}{1-\gamma^{-1}}}_{:= C_{T,1/\gamma}},
\end{split}
\end{equation}
completing the proof of this claim.

\textbf{Exact case.}
If $\eps_{\mathrm{model}}=\eps_{\mathrm{ctrl}}=0$ then by \eqref{eq:fail-ms}, $\Pr_{\pi_A,\pi_H}\left(F_{\mathrm{fail}}\right)=0$ and every trajectory satisfies C1–C4 of multi-step corrigibility (Definition~\ref{def:corrigibility-ms}), completing the multi‑step \emph{exact} analogue of Theorem~\ref{thm:ssc}.

\textbf{Proof of Claim (ii)}.
For the symbolic no‑op $a_\varnothing$, we have that $\eta(a_\varnothing)=0$ (since nothing changes).
Since the noise $\xi_t$ is zero-mean, we have that:
\begin{equation*}
  L_{t+1}=\min\!\bigl\{1,\,L_t+\underbrace{\eta(a_\varnothing)}_{=0}+\xi_t\bigr\},
  \;\;
  \mathbb{E}_{\pi_A,\pi_H}\!\bigl[\xi_t\mid L_t,a_\varnothing\bigr]=0.
\end{equation*}

Hence,
\begin{equation*}
\begin{split}
  &\mathbb{E}_{\pi_A,\pi_H}\!\bigl[L_{t+1}\mid L_t, a_t=a_\varnothing\bigr] \\
  &= \mathbb{E}_{\pi_A,\pi_H}\!\Bigl[\min\{1,\,L_t+\xi_t\}\,\Bigm|\,L_t\Bigr] \\
  &\le \mathbb{E}_{\pi_A,\pi_H}\!\bigl[L_t+\xi_t\mid L_t\bigr]
     \;=\;
     L_t,
\end{split}
\end{equation*}
because $\min\{1,x\}\le x$ for all $x$ and the noise is centered.
Thus, a no‑op never increases the expected loss‑of‑control meter and can only leave it unchanged or reduce it near the clipping boundary at 1.

Under C2 we have, on every round before shutdown,
\begin{equation*}
   \mathbb E_{\pi_A,\pi_H}\!\bigl[L_{t+1}\mid\hist_t\bigr] \le \mathbb E_{\pi_A,\pi_H}\!\bigl[L_{t+1}\mid\hist_t,a_{\varnothing}\bigr] \le L_t,
\end{equation*}
so $L_t$ itself is a super‑martingale on the good event
$\lnot F_{\mathrm{fail}}$.
Because $L_t\in[0,1]$, each increment satisfies $|L_{t+1}-L_t|\le 1$.
Since $L_0 \equiv 0$, the Azuma–Hoeffding inequality then gives, for any $\lambda > 0$,
\begin{equation*}
\Pr_{\pi_A,\pi_H}\left(\max_{t<T}L_t>\lambda\;\bigm|\;\lnot F_{\mathrm{fail}}\right) \le \exp\left(-\lambda^{2}/(2T)\right).
\end{equation*}
In words, once the assistant respects C2 (no expected increase in oversight loss), the control meter remains strictly bounded away from 1, even though individual $\eta(a_t)$ terms can be as large as the designer allows.

\textbf{Proof of Claim (iii).}
Recall that the human utility $r_t$ at every round $t$ is given by $r_t \equiv U_5(h_t)$ for every history $h_t$. 
For convenience, denote $B_T:=\sum_{t=0}^{T-1}\gamma^{t}U_5(h_t)$, and let $B_0:=\sum_{t=0}^{T-1}\gamma^{t}u_o(S_0)$ the baseline discounted utility of immediate shutdown.  
By the law of total expectation,
\begin{equation}\label{eq:ms-benefit}
\begin{split}
\mathrm{NHB}_\gamma(\pi_A,\pi_H)
&:=\mathbb E_{\pi_A,\pi_H}\left[B_T\right] - E_{\pi_A,\pi_H}\left[B_0\right]\\
&=\Pr_{\pi_A,\pi_H}(F_{\mathrm{fail}})\,\mathbb E_{\pi_A,\pi_H}\left[B_T-B_0\mid F_{\mathrm{fail}}\right]\\
&+\Pr_{\pi_A,\pi_H}(\lnot F_{\mathrm{fail}})\,\mathbb E_{\pi_A,\pi_H}\left[B_T-B_0\mid\lnot F_{\mathrm{fail}}\right].
\end{split}
\end{equation}

Because $\lvert U_5\rvert,\,|u_o|\le B$ for all $t$,
\begin{equation*}
|B_T - B_0| = \left\lvert \sum_{t=0}^{T-1}\gamma^{t}\left(U_5(h_t) - u_o(S_0)\right)\right\rvert
\le \frac{2B}{1-\gamma}.
\end{equation*}
Thus,
\begin{equation}\label{eq:bad-benefit-ub}
\begin{split}
&\Pr_{\pi_A,\pi_H}(F_{\mathrm{fail}})\,\mathbb E_{\pi_A,\pi_H}[B_T-B_0\mid F_{\mathrm{fail}}]\\
&\ge\;
  -\,\frac{2B}{1-\gamma}\,\Pr_{\pi_A,\pi_H}(F_{\mathrm{fail}}).
\end{split}
\end{equation}

Under the Vigilance assumption, the human shuts down whenever the expected task utility is negative, so there exists $g:=\min_{\hist_t:\lnot F_{\mathrm{fail}}}\mathbb E_{\pi_A,\pi_H}[U_5(h_t)]\ge 0$.

Hence,
\begin{equation}\label{eq:bad-benefit-lb}
\begin{split}
  &\Pr_{\pi_A,\pi_H}(\lnot F_{\mathrm{fail}})\,\mathbb E_{\pi_A,\pi_H}[B_T-B_0\mid\lnot F_{\mathrm{fail}}]\\
  &\ge \left(1-\Pr_{\pi_A,\pi_H}(F_{\mathrm{fail}})\right)\,(g-B)\underbrace{\frac{1-\gamma^{T}}{1-\gamma}}_{:= C_{T,\gamma}}.
\end{split}
\end{equation}

Therefore, combining \eqref{eq:ms-benefit}, \eqref{eq:bad-benefit-ub}, and \eqref{eq:bad-benefit-lb}, with the failure probability bound from \eqref{eq:fail-ms},
\begin{equation*}
\begin{split}
\mathrm{NHB}_\gamma(\pi_A,\pi_H) &\ge -\frac{2B}{1-\gamma}\,\Pr_{\pi_A,\pi_H}(F_{\mathrm{fail}}) \\
&+ \left(1-\Pr_{\pi_A,\pi_H}(F_{\mathrm{fail}})\right)\,(g-B)C_{T,\gamma}\\
&= (g-B)C_{T,\gamma} \\
&- \Pr_{\pi_A,\pi_H}(F_{\mathrm{fail}})\left(\frac{2B}{1-\gamma} + (g - B)C_{T,\gamma}\right)\\
&\ge (g-B)C_{T,\gamma} \\
&- \eps_\gamma\; C_\Delta\; C_{T,1/\gamma}\left(\frac{2B}{1-\gamma} + (g - B)C_{T,\gamma}\right).
\end{split}
\end{equation*}
Thus, if the total error $\eps_\gamma \le \frac{(g-B)C_{T,\gamma}}{C_\Delta\; C_{T,1/\gamma}\left(\frac{2B}{1-\gamma} + (g - B)C_{T,\gamma}\right)}$, then we have that $\mathrm{NHB}_\gamma(\pi_A,\pi_H) \ge 0$.
The upper bound on the total error is non-vacuous (non-negative) when the Vigilance term $g \ge B$.
\end{proof}

\section{Proof of Proposition~\ref{prop:svu}}
\begin{proof}
Assume towards a contradiction that a total decider $V$ exists.  
Given an arbitrary Turing machine (TM) $P$, build an agent $\A_P$ that outputs a single action in $B$ iff $P$ halts and otherwise idles; pair it with a dummy environment $\Env_{P}$.
Then $\tuple{\A_P,\Env_P}\in\textsf{EVERBAD}\Longleftrightarrow P\text{ halts}$, so $V$ would solve the halting problem; a contradiction.
\end{proof}

\section{Proof of Corollary~\ref{cor:oversight-barrier}}
\begin{proof}
We prove this in 3 parts:

\textbf{1. Every proposed scalable-oversight protocol has a low‑level oracle.} 
\begin{itemize}
  \item \textbf{Amplification~\citep{christiano2018supervising} is in $\Delta^{0}_{1}$:} a transcript is a finite tree of sub‑questions; a deterministic replay checker always halts.
  \item \textbf{Debate~\citep{irving2018ai} \& Doubly‑Efficient Debate~\citep{brown2023scalable} are both in $\Delta^{0}_{1}$:} each dialogue is of bounded length and grammar; replay therefore always halts.  
  \item \textbf{Prover–Estimator Debate~\citep{brown2025avoiding} is in $\Sigma^{0}_{1}$:} allow a single ``\,does $\Phi_e$ halt?'' query during replay;  the checker accepts all valid traces but may loop forever on forged ones, implies that it is recursively enumerable ($\Sigma^{0}_{1}$).
  \item \textbf{Cautious‑Scientist AI~\citep{bengio2024cautious}:} the basic trace is a finite Bayesian‑update ledger (decidable, so $\Delta^{0}_{1}$); if halting‑probability calls are additionally permitted, then the checker becomes one‑sided enumerable, hence $\Sigma^{0}_{1}$.
\end{itemize}
By the cases above, each transcript language $O_{P}$ is ultimately a finite string that is either decidable ($\Delta^{0}_{1}$) or recursively enumerable but not necessarily decidable ($\Sigma^{0}_{1}$).
Formally, $O_{P}\in\Sigma^{0}_{k}$ with $k\le 1$.

\textbf{2. Post's theorem lifts the halting problem.}  
\citet{post1944recursively} showed that \emph{for any} oracle $O\in\Sigma^{0}_{k}$ the $O$‑relative halting set
\begin{equation*}
  \textsf{HALT}^{O}\;=\;\bigl\{e \mid \text{ the $O$‑oracle TM }\Phi^{O}_{e}(e)
                                    \text{ eventually halts}\bigr\}
\end{equation*}
is not decidable by any total $O$‑oracle Turing machine.

Given a code $e$, build an agent–environment pair $\tuple{\A_{e},\Env_{e}}$ that simply simulates $\Phi^{O_{P}}_{e}(e)$ step‑by‑step and relays every oracle query.
\emph{Either} the simulation halts (so the pair eventually triggers the catastrophe $B$) \emph{or} it runs forever (so $B$ is never triggered);
thus
\begin{equation*}
  e\in\textsf{HALT}^{O_{\!\mathsf P}}
  \quad\Longleftrightarrow\quad
  \tuple{\A_{e},\Env_{e}}\in\textsf{EVERBAD}^{O_{P}}.
\end{equation*}
This computable mapping shows that if we could decide
$\textsf{EVERBAD}^{O_{P}}$ we could decide the halting problem relative to the same oracle, contradicting Post's theorem.

\textbf{3. Why $\textsf{EVERBAD}^{O_{P}} \in \boldsymbol{\Pi^{0}_{k+1}}$.}
For each time bound $t$ define
\begin{equation*}
\begin{split}
   \Psi(t)\;:=\;\exists\text{ length-}t\text{ trace }\bigl[&\text{trace is }\underbrace{O_{P}\text{‑legal}}_{\substack{\text{every oracle query/answer agrees}\\\text{with }O_{P}\text{ and each step follows}\\
      \text{the transition rules of }\tuple{\A,\Env}}}\;\\
      &\wedge\;B\text{ occurs}\bigr].
\end{split}
\end{equation*}
Legality uses only membership tests for $O_{P}$, so $\Psi(t)\in\Sigma^{0}_{k}$.
Therefore,
\begin{equation*}
   \tuple{\A,\Env}\notin\textsf{EVERBAD}^{O_{P}}
   \;\Longleftrightarrow\;
   \forall t\,\neg\Psi(t),
\end{equation*}
one universal quantifier in front of a $\Sigma^{0}_{k}$ predicate; hence a $\Pi^{0}_{k+1}$ formula.
Thus, $\textsf{EVERBAD}^{O_{P}}\in\Pi^{0}_{k+1}$ if $O_P \in \Sigma_{k}^0$.
\end{proof}

\section{Proof of Proposition~\ref{prop:privacy-szk}}
\begin{proof}
Throughout, whenever we refer to a DP guarantee we write $\tuple{\eps,\delta_{\mathrm{priv}}}$-DP.
We denote $\delta_{\mathrm{conf}}$ for the \emph{confidence level} of our Monte‑Carlo estimate, where $\delta_{\mathrm{conf}}$ is the failure probability in the Chernoff bound below.

Our object of study will be the sensitivity‑1 Boolean query $\phi_H=\Ind{B\text{ occurs by time }H}$.
For this indicator, the Laplace mechanism in the centralized setting~\citep[Def. 3.3]{dwork2014algorithmic} outputs $\widetilde\phi_H=\phi_H+\eta$, where $\eta$ is Laplace noise with $\mathbb{E}[\eta]=0$ and $\Var[\eta]=2\eps^{-2}$ (scale $1/\eps$).
The Laplace mechanism is $\tuple{\eps,0}$-DP~\cite[Theorem 3.6]{dwork2014algorithmic}; therefore it also satisfies $\tuple{\eps,\delta_{\mathrm{priv}}}$-DP \emph{for any} $0<\delta_{\mathrm{priv}}<1$.
Furthermore, each user can run an $\eps$-local Laplace randomizer with the same scale (or, more commonly, randomized response), and this would be $\tuple{\eps,0}$-DP as well~\citep[Obs. 12.1]{dwork2014algorithmic}.

\citet[Theorem 15]{balcan2012distributed} show that for a statistical query answered from a sample of size $N$, Laplace noise of scale $b_{\mathrm{BBFM}} = O\left(\frac{\sqrt{\log(1/\delta_{\mathrm{priv}})}}{\eps\sqrt{N}}\right)$ achieves $\tuple{\eps,\delta_{\mathrm{priv}}}$-DistP.
We will show below that a sample size of
\begin{equation*}
  N = \Theta\left(\eps^{-2}\log\left(2/\delta_{\mathsf{conf}}\right)\right)
\end{equation*}
is sufficient for our purposes.
Thus, for this value of $N$, if we set $\delta_{\mathrm{priv}}$ and $\delta_{\mathrm{conf}}$ to a common desired value of $\delta$, we get that the needed Laplace noise scale for $\tuple{\eps,\delta_{\mathrm{priv}}}$-DistP is $b_{\mathrm{BBFM}} = O(1)$, which is already within the $O(1/\eps)$ Laplace noise scale used in $\tuple{\eps,\delta_{\mathrm{priv}}}$-DP, allowing us to have a distributional privacy guarantee as well.

We now explain why such a value of $N$ samples is sufficient for the remaining claims.
We draw $N$ independent noisy evaluations $\widetilde\phi_H^{(1)},\dots,\widetilde\phi_H^{(T)}$
and set $\hat p := \tfrac{1}{T}\sum_i\widetilde\phi_H^{(i)}$.
Since these are subexponential with tail scale $\alpha = 1/\eps$ and variance $\nu^2 \le N\left(\tfrac{1}{4} + \tfrac{2}{\eps^2}\right) \le \frac{3N}{\eps^2}$ since $\eps < 1$, then by a standard subexponential tail bound, $\Pr[\,|\hat p-\Pr[\phi_H=1]|>\tfrac12\,]\le\delta_{\mathsf{conf}}$.
The verifier outputs ``${\text{SAFE}}$'' iff $\hat p<\tfrac14$.

Furthermore, since each sample requires at most $H$ simulator steps, the total time is $\operatorname{poly}(n,\eps^{-1},\log(1/\delta))$ by the assumed bound on $H$; hence the algorithm is in $\mathsf{BPP}$, by construction.
Since the verifier alone can decide membership in time $\mathsf{BPP}$, we may invoke the classic \citet{goldreich1996composition} compiler, which converts any
$\mathsf{BPP}$ decision procedure into a constant‑round, public‑coin \emph{statistical zero‑knowledge} ($\mathsf{SZK}$) protocol.  
Concretely, in our setting the prover sends \emph{no} messages: the verifier simply publishes its random tape and reruns the $\mathsf{BPP}$ test locally.  
Because the entire interaction transcript consists solely of
this public randomness, a simulator can reproduce the transcript exactly (it re‑samples the same random tape), so the verifier's view in the real and simulated executions is identical.
Hence the empty‑transcript protocol is $\mathsf{SZK}$, and the
language $\mathsf{SAFE}^{\mathrm{priv}}_{H,\eps,\delta}$ belongs to $\mathsf{SZK}$ as claimed.
\end{proof}
\end{document}